\newtheorem{theorem}{Theorem}
\definecolor{mygray}{gray}{.6}
\definecolor{myblue}{RGB}{89,158,254}
\definecolor{mygreen1}{RGB}{81,150,111}
\definecolor{mygreen2}{RGB}{93,174,86}
\definecolor{myred}{RGB}{160,0,0}
\definecolor{myyellow}{RGB}{227,207,87}
\title{FQ-PETR: Fully Quantized Position Embedding Transformation \\ for Multi-View 3D Object Detection}
\author{
    Jiangyong Yu\textsuperscript{\rm 1},
    Changyong Shu\textsuperscript{\rm 1}\thanks{Corresponding author.},
    Sifan Zhou\textsuperscript{\rm 1,2}\thanks{This work was done during an internship at HOUMO AI.},
    Zichen Yu\textsuperscript{\rm 3},
    Xing Hu\textsuperscript{\rm 1},
    Yan Chen\textsuperscript{\rm 1},
    Dawei Yang\textsuperscript{\rm 1}\footnotemark[1]\\
}
\begin{document}

\maketitle

\begin{abstract}
Camera-based multi-view 3D detection is crucial for autonomous driving. PETR and its variants (PETRs) excel in benchmarks but face deployment challenges due to high computational cost and memory footprint. Quantization is an effective technique for compressing deep neural networks by reducing the bit width of weights and activations. However, directly applying existing quantization methods to PETRs leads to severe accuracy degradation. This issue primarily arises from two key challenges: (1) significant magnitude disparity between multi-modal features—specifically, image features and camera-ray positional embeddings (PE), and (2) the inefficiency and approximation error of quantizing non-linear operators, which commonly rely on hardware-unfriendly computations. In this paper, we propose \textbf{FQ-PETR}, a fully quantized framework for PETRs, featuring three key innovations: (1) Quantization-Friendly LiDAR-ray Position Embedding (QFPE): Replacing multi-point sampling with LiDAR-prior-guided single-point sampling and anchor-based embedding eliminates problematic non-linearities (e.g., inverse-sigmoid) and aligns PE scale with image features, preserving accuracy. (2) Dual-Lookup Table (DULUT): This algorithm approximates complex non-linear functions using two cascaded linear LUTs, achieving high fidelity with minimal entries and no specialized hardware. (3) Quantization After Numerical Stabilization (QANS): Performing quantization after softmax numerical stabilization mitigates attention distortion from large inputs. On PETRs (e.g. PETR, StreamPETR, PETRv2, MV2d), FQ-PETR under W8A8 achieves near-floating-point accuracy ($<$ 1\% degradation) while reducing latency by up to 75\%, significantly outperforming existing PTQ and QAT baselines.
\end{abstract}

\begin{links}
    \link{Code}{https://github.com/JiangYongYu1/FQ-PETR}
\end{links}

\section{Introduction}
\begin{figure}[!tb]
\vspace{-3mm}
\centering
	\includegraphics[width=1.0\linewidth]{./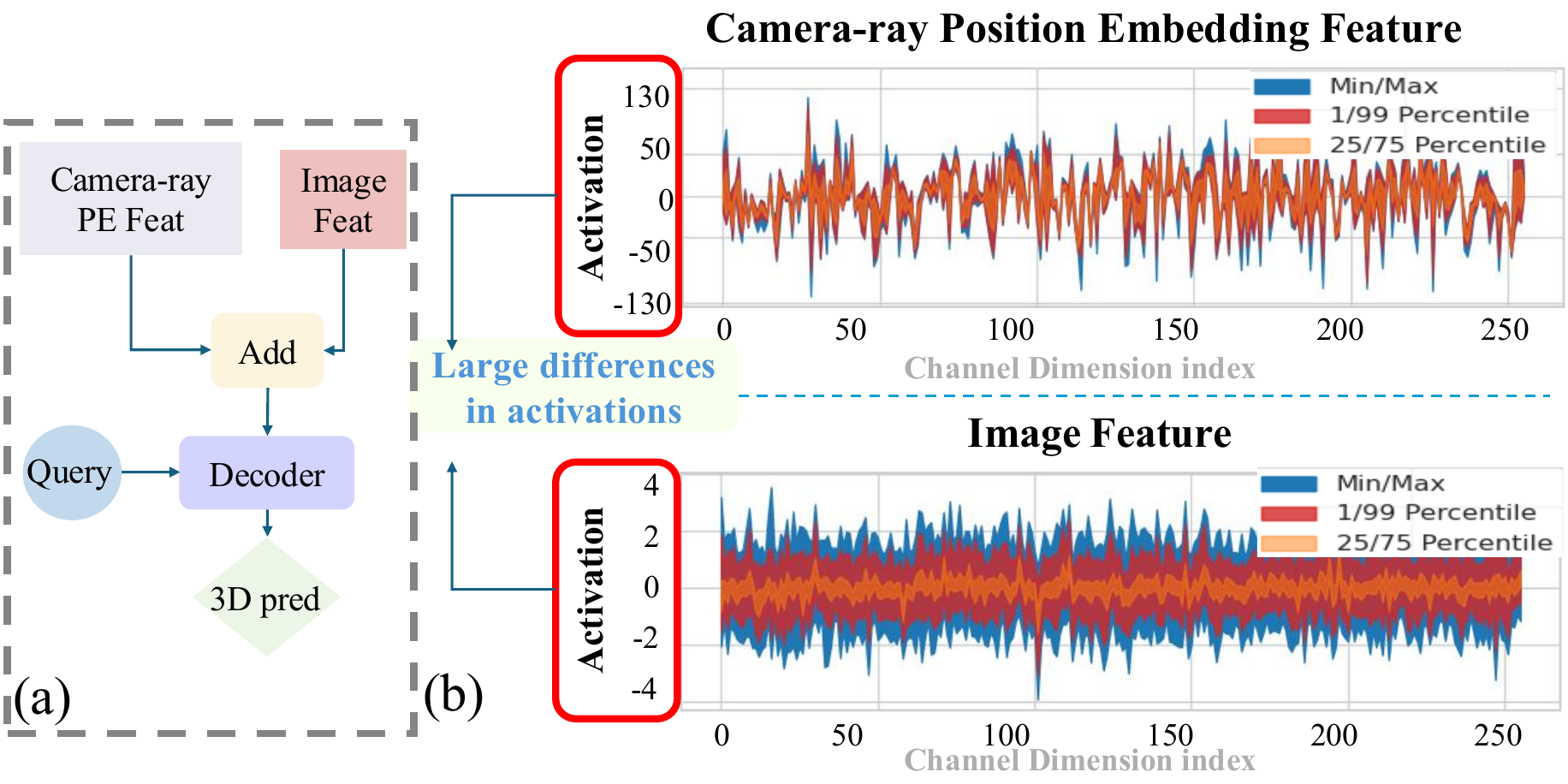}
    \vspace{-0.6cm}
	\caption{(a) Illustration of PETR pipeline. The camera-ray position embedding and image features are fused via element-wise addition, and then combined with object queries through a Transformer decoder to generate 3D object predictions. (b) The activation values of camera-ray position embedding feature range form -130 to 130, where image feature are primarily centered around 0.}
	\label{fig:figure1}
\vspace{-6mm}
\end{figure}

\begin{figure}[ht]
\centering
	\includegraphics[width=1.0\linewidth]{./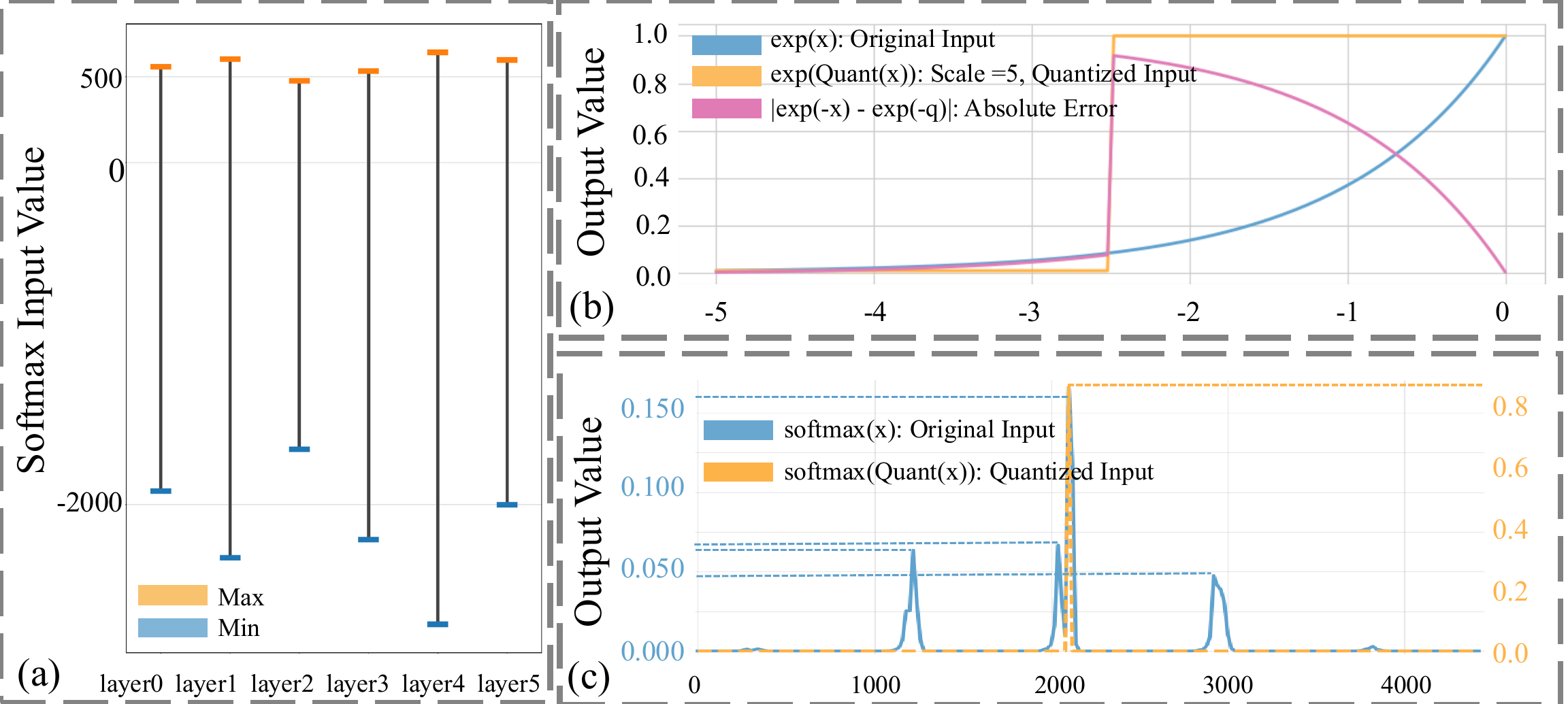}
    \vspace{-0.4cm}
    \caption{Softmax input range and quantization effects. (a) Logits in PETR’s cross-attention span an extremely wide interval, forcing a large per-tensor INT8 scale.
    (b) With a representative scale \(s=5\), rounding the inputs to \(\exp(\cdot)\) introduces pronounced piece-wise errors \(\lvert\exp(x)-\exp(\hat{x})\rvert\).
    (c) Softmax outputs from quantized inputs (orange) exhibit peak attenuation and positional shift compared with the floating-point baseline (blue), indicating attention distortion.}
	\label{fig:figure2}
\vspace{-5mm}
\end{figure}

Camera-based multi-view 3D object detection has emerged as a pivotal technology for autonomous driving systems, offering unparalleled cost-effectiveness while providing rich semantic information critical for scene understanding. This approach has gained traction over LiDAR-based solutions due to its lower hardware cost, higher resolution for distant objects, and seamless integration with existing vehicle camera systems. Among various paradigms for multi-camera 3D perception, PETR and its variants (collectively PETRs)~\cite{liu2022petr,liu2022petrv2,streampetr} have established new state-of-the-art benchmarks by ingeniously adapting the transformer-based DETR framework to 3D space. The core innovation lies in their position-aware feature representation: camera-ray positional embeddings transform 2D image features into 3D space coordinates, enabling direct interaction between object queries and 3D features via transformer decoders (Fig.~\ref{fig:figure1}a). This paradigm shift has propelled PETRs to the forefront of major autonomous driving benchmarks like nuScenes and Waymo Open Dataset. Despite their impressive accuracy, PETRs present critical deployment challenges for automotive systems: their substantial computational demands and memory footprint create significant barriers to achieving real-time inference under strict latency and power constraints—essential requirements for safety-critical autonomous driving applications.

Quantization~\cite{yang2024post} is an effective method for compressing deep neural networks by reducing the bit width of weights and activations, resulting in lower memory usage, reduced computational complexity, and faster inference. However, we identify that PETRs exhibit unique quantization challenges distinct from other vision models. \textbf{First, the modality disparity in feature fusion.} As revealed in Fig.~\ref{fig:figure1}b, camera-ray positional embeddings exhibit a dynamic range ($\pm$130) nearly two orders of magnitude larger than image features ($\pm$4). During feature fusion (element-wise addition), this disparity forces quantization scales to be dominated by positional embeddings, compressing image features into merely 3--5 effective integer bins. Consequently, over 90\% of image feature information is discarded, leading to catastrophic accuracy degradation---up to 20.8\% mAP drop in our experiments. \textbf{Second, non-linear operators present dual challenges.} 
\textit{Challenge 1: Attention distortion from extreme-value distributions.} 
Inputs to operators like Softmax exhibit wide dynamic ranges in PETR (Fig.~\ref{fig:figure2}a). As shown in Fig.~\ref{fig:figure2}b, directly quantizing the inputs to these operators yields large errors. For Softmax, this further causes a severe attention shift (Fig.~\ref{fig:figure2}c).
\textit{Challenge 2: Hardware implementation overhead.} 
The computational burden manifests differently across platforms: 
(1) On GPUs, specialized units (e.g., NVIDIA Tensor Cores) show 3--5$\times$ lower throughput for Softmax versus GEMM~\cite{dao2023flashattention}; 
(2) On edge devices, lookup tables (LUTs)~\cite{wang2018look} face memory-complexity tradeoffs---linear implementations require exponential memory growth while non-uniform alternatives~\cite{yu2022nn} demand complex hardware; 
(3) Software approximations~\cite{kim2021bert,li2023vit} sacrifice precision for deployability.

The co-existence of these bottlenecks---extreme modality disparity and non-linear computation overhead---creates a unique challenge for PETRs quantization that remains unaddressed by existing approaches. To bridge this gap, we propose FQ-PETR---the first framework enabling fully integer inference for PETRs without accuracy compromise. Our co-design approach integrates three synergistic innovations targeting the identified bottlenecks:

(1) \textbf{Quantization-Friendly LiDAR-ray PE (QFPE)} rethinks position embedding from first principles. Inspired by LiDAR physics, we sample a single 3D point per pixel along depth rays, eliminating multi-point interpolation and inverse-sigmoid distortion (Fig.~\ref{fig:distribution_before_and_after_insigmoid}b). Anchor-based embeddings with learnable bounds constrain dynamic range while enhancing spatial coherence. This hardware-aligned design reduces position magnitude by 4.4$\times$ while improving detection accuracy.

(2) \textbf{Dual-Lookup Table (DULUT)} decouples non-uniform approximation from hardware dependencies. By cascading two linear LUTs---the first acting as a ``nonlinear index mapper''---our algorithm achieves exponential entry reduction (e.g., 32+32 entries for SiLU) while maintaining $<$0.1\% approximation error. Crucially, DULUT requires only standard LUT units, making it deployable across diverse edge platforms.

(3) \textbf{Quantization After Numerical Stabilization (QANS)} specifically targets Softmax quantization. By applying integer conversion after logit subtraction (numerical stabilization), we constrain inputs to a non-positive range [$-\beta$, 0]. Adaptive $\beta$ selection minimizes distribution shift, preserving attention patterns with 8-bit precision.

Comprehensive evaluations demonstrate FQ-PETR's versatility across PETR and its variants (e.g. PETR, StreamPETR, PETRv2, and MV2d). Under full integer quantization (W8A8), it achieves near-floating-point accuracy ($<$1\% mAP/NDS degradation) with 75\% latency reduction and 4$\times$ model compression---significantly outperforming state-of-the-art PTQ/QAT baselines. These advancements provide a critical pathway toward deploying high-performance 3D perception in resource-constrained vehicles.

\section{Related Work}

\paragraph{Multi-View 3D Object Detection.} The RGB camera has various applications in computer vision~\cite{zhang2023multi,tao2023dudb,wang2024scantd,wang2025target,zhao2023benchmark,zhao2024balf,dai2025unbiased,yin2025knowledge}. Surround-view 3D object detection is essential for autonomous driving and is generally categorized into LSS-based~\cite{bevdet,bevdepth,bevdet4d} and transformer-based~\cite{liu2022petr,shu20233DPPE} approaches. LSS-based methods project multi-camera features onto dense BEV (Bird's Eye View) representations~\cite{lss}, but their high memory consumption hinders efficient long-range perception. Transformer-based methods leverage sparsity to enhance long-distance perception. Among these, the PETR series have gained significant attention. PETR~\cite{liu2022petr} transforms 2D image features into 3D representations using 3D positional embedding. PETRv2~\cite{liu2022petrv2} introduces temporal feature indexing, while StreamPETR~\cite{streampetr} extends temporal query processing. Some works~\cite{wang2022focal, wang2023object, chu2024rayformer} accelerate processing by incorporating 2D detection priors. CMT~\cite{cmt} fuses vision and LiDAR point clouds. Improvements to PETR's positional embedding have also been explored~\cite{shu20233DPPE, hou2024open}. Additionally, PETR has been fused into the Omnidrive~\cite{wang2024omnidrive} to enhance 3D perception with large models.

\paragraph{Model Quantization.}
Quantization compresses models by converting weights and activations from floating-point to lower-bit integer representations~\cite{LQNets_eccv2018,PTQ_4bit_rapid_deployment_nips2019,Low_bit_quant_iccvw2019}. Among various methods~\cite{wei2022outlier,ptq4ris,liu2024spinquant,ashkboos2024slicegpt,lu2024generic,li2025sepprune,mambaquant,moequant,xuchen2025}, we focus on uniform symmetric quantization, mapping floating-point values \( x_f \) to discrete \( k \)-bit integer values \( x_q \) as:
\begin{equation}\label{e:eq1}
\begin{aligned}
x_q = \text{clamp}\left( \left\lfloor \frac{x_f}{s} \right\rceil, -2^{k-1}, 2^{k-1}-1 \right),
\end{aligned}
\end{equation}
where \( s \) is the scaling factor computed as:
\begin{equation}\label{e:eq2}
\begin{aligned}
s = \frac{x_f^{\text{max}} - x_f^{\text{min}}}{2^k},
\end{aligned}
\end{equation}
with \( x_f^{\text{max}} \) and \( x_f^{\text{min}} \) being the maximum and minimum floating-point values from the calibration dataset. Quantization methods are categorized into Quantization-Aware Training (QAT) and Post-Training Quantization (PTQ). QAT~\cite{esser2019learned,bhalgat2020lsq+} introduces quantization-aware losses during training, enhancing robustness but requiring resource-intensive retraining. Compared to QAT, PTQ offers rapid deployment without retraining. While PTQ methods have been successful on CNNs~\cite{nagel2019data,nagel2020up,li2021brecq}, they often perform poorly on transformer-based 3D detectors due to structural differences. For ViTs, practical PTQ algorithms have been developed~\cite{yuan2022ptq4vit,lin2021fq,li2023repq,shi2024p}. In the context of transformer-based object detection models, Q-DETR~\cite{xu2023q} and AQ-DETR~\cite{wang2024aq} use QAT and knowledge distillation to mitigate performance degradation in low-bit quantization of DETR models. SmoothQuant~\cite{xiao2023smoothquant} and OmniQuant~\cite{shao2023omniquant} shift quantization difficulty from activations to weights through
mathematical transformation. Recently, QuaRot~\cite{ashkboos2024quarot} uses random rotation matrices to enable
4-bit quantization of weights and activations, while OstQuant~\cite{hu2025ostquant} learns these matrices
to refine 4-bit quantization. MQuant~\cite{yu2025mquant} extends the rotate solution to multimodal language models. These methods primarily focus on quantizing GEMM operations. For nonlinear activation functions, lookup table (LUT) techniques~\cite{wang2018look} are commonly used. Additionally, methods like I-BERT~\cite{kim2021bert}, I-ViT~\cite{li2023vit} and I-LLM~\cite{hu2024llm} employ integer approximation to achieve fixed-point computation.

\noindent\textbf{Quantization for 3D Object Detection.}
Quantization methods have been applied to accelerate 3D object detection in autonomous driving and robotics. Leveraging advances in image quantization, QD-BEV~\cite{zhang2023qd} employs QAT and distillation in multi-camera 3D detection, achieving smaller models and faster inference than the \textit{BEVFormer} baseline~\cite{li2022bevformer}. For LiDAR-based detection, LIDAR-PTQ~\cite{zhou2024lidar} achieves state-of-the-art quantization on different 3D detector, with performance close to FP32 and $\sim3$$\times$ speedup. PillarHist~\cite{pillarhist,zhou2024information} improves pillar encoders via entropy-guided height histogram encoding, enhancing the quantization ability for various lidar-based 3D detector~\cite{centerpoint,zhou2023fastpillars}. Point4bit~\cite{point4bit} propose a unified 4-bit PTQ framework for efficient low-bit deployment with minimal accuracy loss to various 3D perception tasks under real-world hardware constraints. To our knowledge, there are no PTQ solution tailored for transformer-based 3D detection in autonomous driving.

\section{Methodology}
\label{sec:3}
In this section, we propose a fully quantization framework called \textbf{FQ-PETR} specifically designed for PETR series models. First, we present a Quantization-Friendly LiDAR-ray position embedding (QFPE) to reduce the gap between the numerical range of 3D PE feature and image feature while maintaining floating-point accuracy. Second, we introduce a dual-lookup table (DULUT), which abstracts the index comparator required for nonlinear lookup tables into a nonlinear function, and uses a linear LUT to equal it. It maintains high approximation fidelity with fewer table entries without requiring special hardware support. Third, we propose quantization after numerical stabilization (QANS), which solves the problem of attention shift caused by quantization when the input range of Softmax is too large.

\begin{figure}[!htbp]
\vspace{-2mm}
	\includegraphics[width=1\linewidth]{./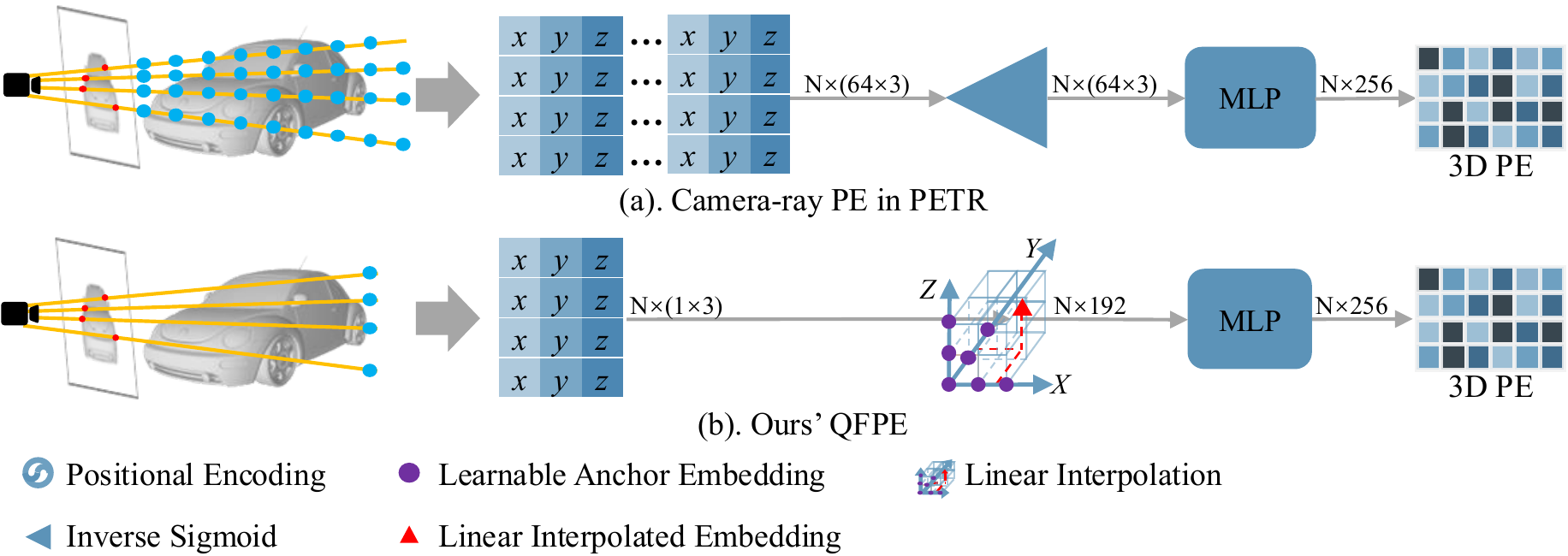}
	\caption{
 The overall architecture comparison of camera-ray PE, lidar-ray PE and our QFPE.
	}
    \vspace{-2mm}
	\label{fig:pe_compare}
    \vspace{-3mm}
\end{figure}

\subsection{Quantization-Friendly LiDAR-ray Position Embedding (QFPE)}\label{sec:3.1}
We first analyze the quantization failure from the perspective of magnitude imbalance, demonstrating that structural optimization of the model is essential for resolution. Based on this analysis, we present our QFPE design solution.

\paragraph{Magnitude Imbalance in PETR}
\label{subsec:imbalance} As evidenced in Fig.~\ref{fig:figure1}(b), a significant modality discrepancy exists: the camera-ray position embedding (Camera-ray PE) exhibits a dynamic range of approximately ±130, while multi-view image features are confined within ±4.

Crucially, when quantizing the element-wise summation of the above two features, regardless of the quantization method employed, the scale factor becomes dominated by the Camera-ray PE. This leads to severe compression of image features and consequent catastrophic accuracy degradation. Therefore, addressing this quantization challenge necessitates architectural modifications to the model.

\begin{figure}[htb]
\vspace{-3mm}
\centering
	\includegraphics[width=1.0\linewidth]{./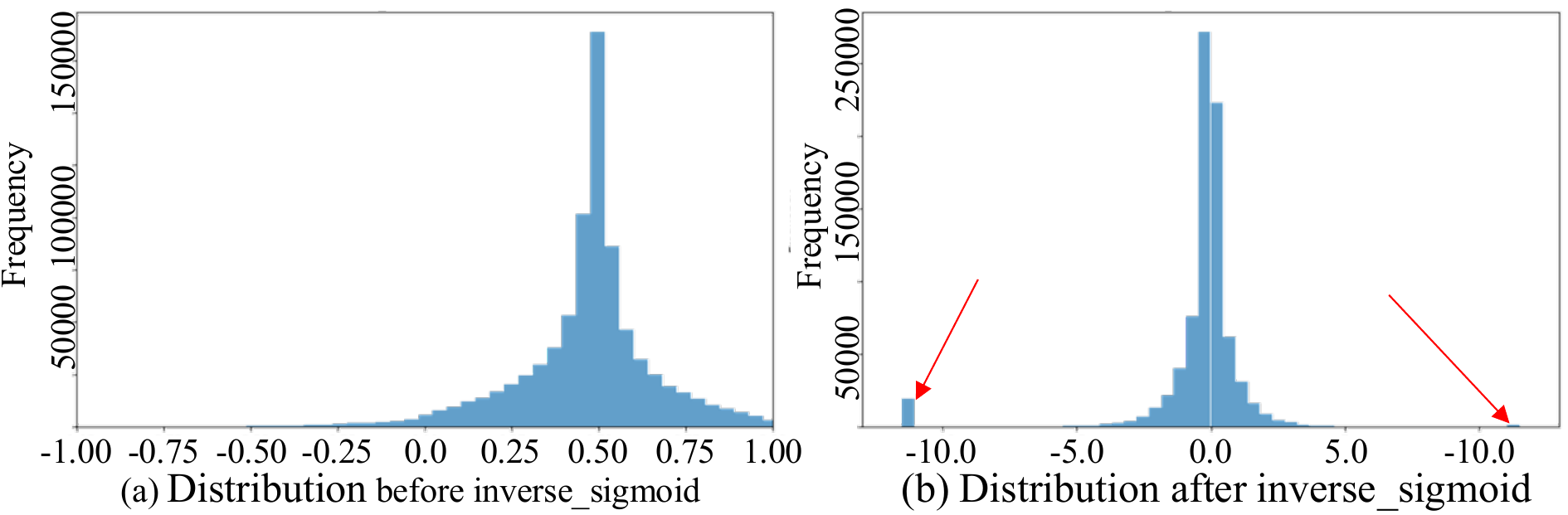}
    \vspace{-0.6cm}
	\caption{Distribution before and after inverse-sigmoid.}
	\label{fig:distribution_before_and_after_insigmoid}
    \vspace{-3mm}
\end{figure}

Through systematic analysis of the Camera-Ray PE construction pipeline (shown in Fig.~\ref{fig:pe_compare}(a)) and Magnitude Propagation Analysis in Appendix A, we identify that the \textit{inverse-sigmoid} operation amplifies the magnitude by approximately 11.5×. Furthermore, this operation distorts the originally balanced depth distribution (Fig.~\ref{fig:distribution_before_and_after_insigmoid}(a)), skewing it toward extreme outliers and thereby exacerbating the Camera-ray PE magnitude inflation.

\paragraph{QFPE Architecture.} To simultaneously address both the magnitude imbalance and outliers challenges, we propose a \textbf{Q}uantization-\textbf{F}riendly LiDAR-ray \textbf{P}osition \textbf{E}mbedding (QFPE) that achieves significant magnitude reduction while maintaining computational efficiency. As illustrated in Fig.~\ref{fig:pe_compare} (b), our design incorporates two key innovations:

\begin{enumerate}
\item \textbf{LiDAR-prior Guided Single-point Sampling}. Leveraging the physical characteristics of LiDAR sensors, we implement a sparse sampling strategy that selects only one 3D point per pixel along each depth ray (Fig.~\ref{fig:pe_compare}(b)), contrasting with the multi-sample approach in conventional camera-ray PE. This design eliminates the need for iterative inverse-sigmoid transformations, substantially reducing embedding variance.

\item \textbf{Anchor-based Constrained Embedding with Convex Combination}. We establish three axis-aligned anchor embeddings $\{E_\alpha^i\}_{i=1}^3$ for each spatial dimension $\alpha \in \{x,y,z\}$, accompanied by corresponding anchor locations $\{L_\alpha^i\}_{i=1}^3$. For any LiDAR-sampled 3D point $(x_j,y_j,z_j)$, we compute axis-specific embeddings through linear interpolation between adjacent anchors:
\begin{equation}
\begin{aligned}
    e_\alpha^j &= \frac{p_\alpha - L_\alpha^{i_\alpha}}{L_\alpha^{i_\alpha+1} - L_\alpha^{i_\alpha}}\,E_\alpha^{i_\alpha+1} 
    + \frac{L_\alpha^{i_\alpha+1} - p_\alpha}{L_\alpha^{i_\alpha+1} - L_\alpha^{i_\alpha}}\,E_\alpha^{i_\alpha}
\end{aligned}
\end{equation}
where $p_\alpha$ denotes the coordinate along axis $\alpha$. The final positional embedding vector is generated by concatenating these axis-wise embeddings and processing them through a lightweight MLP.
\end{enumerate}

\begin{figure}[!t]
\centering
\includegraphics[width=0.9\linewidth]{./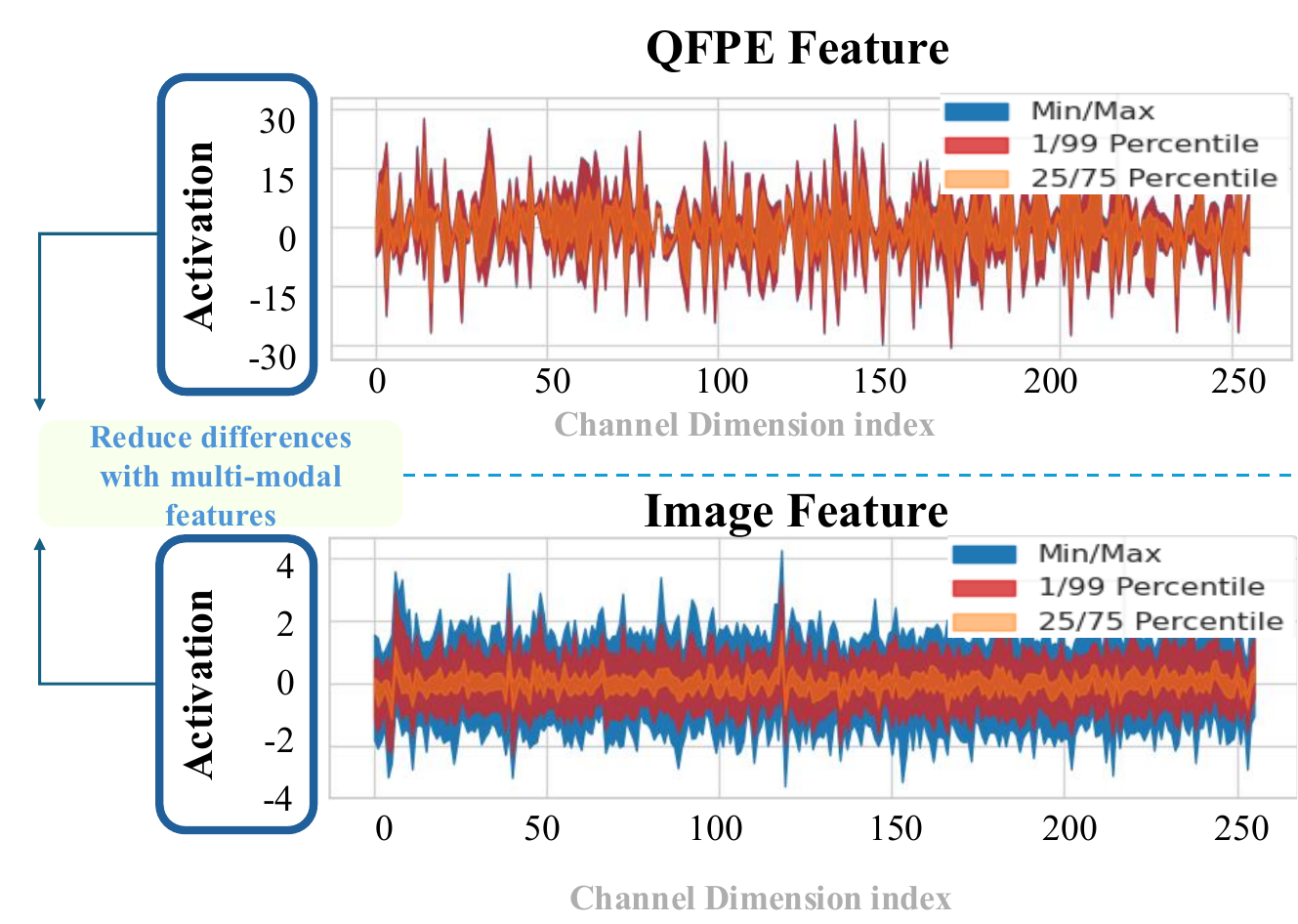}
    \vspace{-0.3cm}
	\caption{The activation values of QFPE feature range form -29.7 to 29.7.}
	\label{fig:QFPE_img_compare}
\vspace{-4mm}
\end{figure}

\begin{figure}[ht]
\centering
	\includegraphics[width=1\linewidth]{./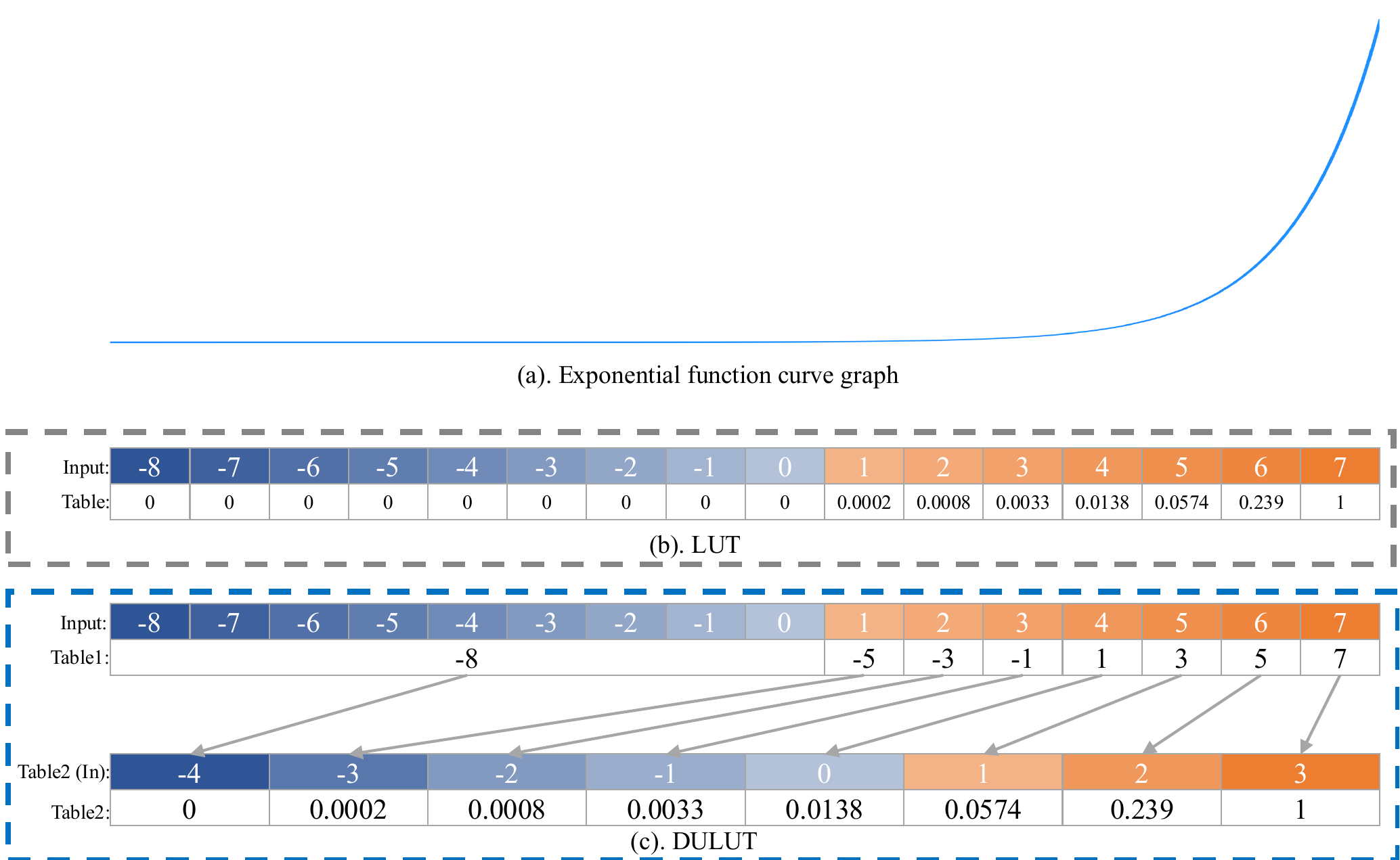}
    \vspace{-0.3cm}
	\caption{INT4 example for $\exp(\cdot)$ with DULUT. 
A 16-entry linear LUT is replaced by two 8-entry tables: 
(i) an index-mapping table that compresses near-flat input ranges, and 
(ii) a value table for interpolation. 
Precision matches the 16-entry LUT with fewer entries.}
	\label{fig:DULUT}
    \vspace{-0.5cm}
\end{figure}

These two innovations ensure our QFPE remains both bounded in magnitude and free of difficult-to-quantize nonlinearities. Fig.~\ref{fig:QFPE_img_compare} shows that the dynamic range of our QFPE ($\pm 29.7$) is only marginally wider than that of standard image features ($\pm 4$)—in stark contrast to PETR’s original ($\pm 127.3$). The proposed design eliminates complex nonlinear operations (inverse-sigmoid), achieving hardware-compatible computation without compromising geometric fidelity.

\subsection{DULUT for Non-linear Functions.} 
\paragraph{Limitations of linear-LUT and NN-LUT.} Linear-LUT requires exponentially more entries as input bit-width increases, quickly exhausting on-chip SRAM. Meanwhile, NN-LUT leverages neural networks to search non-uniform intervals, concentrating entries in high-curvature regions. However, NN-LUT heavily depends on training data distribution and random initialization, requires hardware-specific comparator trees, and only achieves real acceleration with dedicated hardware support. Moreover, the non-uniform intervals complicate fusion and parallel optimization in existing compiler frameworks.
\paragraph{Proposed DULUT Algorithm.}
To overcome these limitations, we first establish a theoretical error bound for linear interpolation. For a twice-differentiable function $f(x)$ interpolated linearly over an interval $[x_i, x_{i+1}]$, the maximum interpolation error is bounded by:
\begin{equation}
\resizebox{0.92\linewidth}{!}{%
$\displaystyle
\max_{x\in[x_i,x_{i+1}]} |f(x)-P(x)|
\le
\frac{(x_{i+1}-x_i)^2}{8}\,
\max_{x\in[x_i,x_{i+1}]}|f''(x)|
+ \varepsilon_{\mathrm{hw}},$
}
\label{eq:DULUT_error_bound}
\end{equation}
where $f''(x)$ denotes the second derivative (curvature), and $\varepsilon_{\mathrm{hw}}$ represents hardware-induced error from finite interpolation resolution and rounding. Hence, intervals must be shortened in regions with large curvature (``enlarge'' regions), while flatter or near-saturated areas with lower curvature can be merged into fewer intervals (``shrink'' regions).

Instead of designing hardware-based non-uniform indexing comparators, DULUT treats the nonlinear comparator itself as a nonlinear function $g(x)$ and approximates it with an additional linear LUT. Thus, the indexing operation reduces to a linear LUT lookup, and the overall problem simplifies to identifying the optimal nonlinear function $g(x)$. Ideally, without hardware error constraints, one could approximate $g(x)$ directly from curvature information. However, due to hardware limitations (e.g., fixed interpolation step size), using curvature to merge intervals can introduce significant approximation errors (more details see Appendix B).

To address this issue, we propose an iterative optimization algorithm accounting explicitly for hardware constraints. We initialize two cascaded linear LUTs equivalently to a single linear LUT. Then, we evaluate the average relative error (ARE) for each interval and iteratively merge intervals with minimal ARE while subdividing intervals with maximal ARE. LUT parameters are updated accordingly if this operation reduces the global error. This optimization continues until no further error reduction can be achieved. The complete procedure is described in Algorithm~\ref{alg:cascade_lut}.

Empirically, DULUT achieves precision comparable to much larger single-table LUT implementations while significantly reducing SRAM overhead. In Fig.~\ref{fig:DULUT}, we illustrate an INT4 case for $\exp(\cdot)$. 
A linear LUT requires 16 entries to store the outputs for all 16 input codes. 
DULUT uses two 8-entry linear tables instead. 
The first table acts as an index mapper: it merges near-flat input ranges where $\exp$ varies little and allocates more indices to high-curvature ranges. 
The second table stores the values and performs linear interpolation. 
This preserves the precision of the 16-entry linear LUT while using fewer total entries. 

To facilitate reproduction, we give the entire process of DULUT and Triton implementation in Algorithm~\ref{algo:algorithm_DULUT}.

\begin{algorithm}[!ht]
\caption{Iterative Optimization Algorithm for DULUT}
\small
\label{alg:cascade_lut}
\begin{algorithmic}[1]
\REQUIRE Target nonlinear function $f(x)$ (FP32); quantization bit‐width $b$; hardware interpolation resolution $k$ (integer points per segment); initial LUT sizes $m_1$, $m_2$; error tolerance $\delta$
\ENSURE Optimized integer LUTs $\text{table}_1$, $\text{table}_2$, and corresponding quantization parameters (scales and zero‐points)
\STATE \textbf{Initialization:} Uniformly partition integer domain $\mathcal{X}=[-2^{b-1}, 2^{b-1}-1]$ into $m_1$ intervals to construct initial $\text{table}_1$. Set $\text{table}_2[i]=f(i)$ and quantize according to hardware resolution $k$.
\REPEAT
    \STATE Compute approximation $\hat{f}(x)=\text{table}_2[\text{table}_1[x]]$ for all $x\in\mathcal{X}$ (or dense samples).
    \STATE Calculate average relative error (ARE) within each interval.
    \STATE Identify interval $j_{\min}$ with minimal ARE and interval $j_{\max}$ with maximal ARE.
    \STATE Merge interval $j_{\min}$ with its neighbor (\textit{shrink}), releasing one interval.
    \STATE Split interval $j_{\max}$ into two equal subintervals (\textit{enlarge}), consuming the released interval.
    \STATE Update and requantize both LUTs; recompute global maximum ARE.
\UNTIL{global maximum ARE $\le \delta$ or no further improvement achievable}
\end{algorithmic}
\end{algorithm}

\begin{algorithm}[ht]
\caption{Pseudo-code of DULUT with Triton}
\label{algo:algorithm_DULUT}
\begin{algorithmic}[1]
\REQUIRE Quantized input $i_q$ ($i_{\text{bit}}$-bit signed); pre-computed tables $\mathbf{T}_1$ ($2^{t_{\text{bit1}}}+1$ slots) and $\mathbf{T}_2$ ($2^{t_{\text{bit2}}}+1$ slots)
\ENSURE Quantized output $o_q$
\vspace{0.2em}
\STATE \textbf{Function} {\small \textsc{LUTKernel}}($i_q$, $\text{table}$, $t_{\text{bit}}$, $i_{\text{bit}}$)
\STATE \hspace{1em} $shift_{\text{bit}} \leftarrow i_{\text{bit}} - t_{\text{bit}}$; \quad $shift_{\text{num}} \leftarrow 1 \ll shift_{\text{bit}}$
\STATE \hspace{1em} $q_{\min}, q_{\max} \leftarrow -(1 \ll (i_{\text{bit}}-1)), (1 \ll (i_{\text{bit}}-1)) - 1$
\FOR{each index $offset$ \textbf{in parallel}}
    \STATE $x \leftarrow i_q[offset] + (1 \ll (i_{\text{bit}}-1))$ \COMMENT{signed $\to$ unsigned}
    \STATE $idx \leftarrow x \gg shift_{\text{bit}}$; \quad $p \leftarrow x \bmod shift_{\text{num}}$
    \STATE $t_l \leftarrow \text{table}[idx]$; \quad $t_r \leftarrow \text{table}[idx+1]$
    \STATE $S \leftarrow (shift_{\text{num}} - p)t_l + p t_r$
    \STATE $S \leftarrow (S + (1 \ll (shift_{\text{bit}}-1))) \gg shift_{\text{bit}}$
    \STATE $o_q[offset] \leftarrow \text{clip}(S, q_{\min}, q_{\max})$
\ENDFOR
\STATE \textbf{return} $o_q$
\vspace{0.3em}
\STATE \textbf{Offline table construction:} Use Algorithm~1.
\vspace{0.2em}
\STATE \textbf{Online inference:}
\STATE \hspace{1em} $idx \leftarrow$ \textsc{LUTKernel}($i_q$, $\mathbf{T}_1$, $t_{\text{bit1}}$, $i_{\text{bit}}$)
\STATE \hspace{1em} $o_q \leftarrow$ \textsc{LUTKernel}($idx$, $\mathbf{T}_2$, $t_{\text{bit2}}$, $i_{\text{bit}}$)
\STATE \textbf{return} $o_q$
\end{algorithmic}
\end{algorithm}

\begin{figure}[ht]
\vspace{-3mm}
\centering
\includegraphics[width=1\linewidth]{./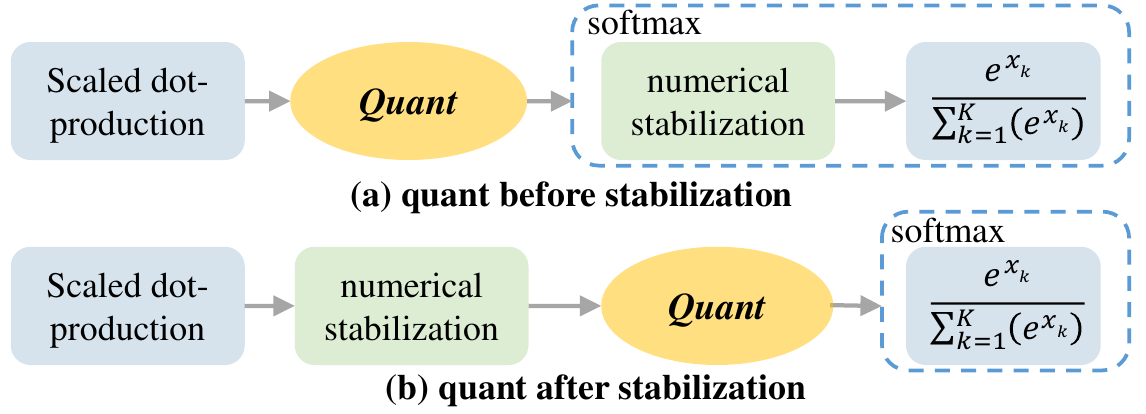}
    \vspace{-4mm}
	\caption{Illustration for quant before/after stabilization.}
	\label{fig:illustration_for_quant_before_after_stabilization}
    \vspace{-4mm}
\end{figure}

\subsection{Quantization After Numerical Stabilization} 
To solves the problem of attention shift caused
by quantization when the input range of Softmax is too large (see Figure\ref{fig:figure2}, we propose quantization After Numerical Stabilization (QANS) (Fig.~\ref{fig:illustration_for_quant_before_after_stabilization}), and adaptively determining the optimal truncation lower bound to minimize softmax error.

After NS, inputs for softmax are non-positive. Values below $-20$ approach zero after exponentiation, so we define a candidate set of scaling factors $S = {s_1, s_2, ..., s_N}$ with $s_i = \frac{i}{2^{k-1}}$ for $k$-bit quantization. The dequantized input is:
\begin{equation}
\vspace{-1mm}
\begin{aligned}
\hat{x}_s^i = s_i \cdot \text{clamp}\left( \text{round}\left( \frac{x_s}{s_i} \right), -2^{k-1}, 2^{k-1}-1 \right)
\end{aligned}
\end{equation}

ensuring $\hat{x}_s^i \in [-i, 0]$. We compute the softmax distributions $p_f = \text{softmax}(x_s)$ and $p_q^i = \text{softmax}(\hat{x}s^i)$, and select the optimal scaling factor $s{\hat{i}}$ minimizing the error:

\begin{equation}
\begin{aligned}
\hat{i} = \operatorname*{argmin}_{i} | p_f - p_q^i |, \quad i = 1,2,...,N.
\end{aligned}
\vspace{-1mm}
\end{equation}

\section{Experiment}


\begin{table*}[t]
  \small
  \centering
  \vspace{-0.3cm}
  \resizebox{0.95\textwidth}{!}{
\begin{tabular}{@{}lcc|l|l|ccccccc@{}}
  \toprule[1.5pt]
  \textbf{Backbone} & \textbf{resolution} & \textbf{Feat} &
  \textbf{Model} & \textbf{Method} &
  \textbf{mAP$\uparrow$} & \textbf{NDS$\uparrow$} & \textbf{mATE$\downarrow$} &
  \textbf{mASE$\downarrow$} & \textbf{mAOE$\downarrow$} &
  \textbf{mAVE$\downarrow$} & \textbf{mAAE$\downarrow$} \\
  \midrule
  \multirow{6}{*}{R50}
    & \multirow{6}{*}{\shortstack{512\\$\times$\\1408}}
    & \multirow{6}{*}{c5}
    & \multirow{3}{*}{PETR} & FP32   & 31.42 & 36.11 & 84.19  & 28.42 & 60.69 & 99.08 & 23.58 \\[1pt]
    & & &            & SQ    & 20.67 & 29.32 & 107.94 & 31.47 & 75.22 & 121.71 & 25.45 \\
    & & &            & Quarot& 22.81 & 30.00 & 104.87 & 30.99 & 74.16 & 118.54 & 25.07 \\ \cmidrule(l){4-12}
    & & & \multirow{3}{*}{\textbf{QF-PETR}} & FP32   & \textbf{31.49} & \textbf{37.20} & \textbf{82.52} & \textbf{27.88} & \textbf{59.91} & \textbf{91.74} & \textbf{23.45} \\[1pt]
    & & &                     & SQ*   & \textbf{31.34} & \textbf{37.17} & \textbf{82.61} & \textbf{27.93} & \textbf{60.00} & \textbf{91.79} & \textbf{23.45} \\
    & & &                     & Quarot* & \textbf{31.46} & \textbf{37.19} & \textbf{82.52} & \textbf{27.89} & \textbf{59.90} & \textbf{91.77} & \textbf{23.45} \\
  \midrule
  \multirow{6}{*}{V2-99}
    & \multirow{6}{*}{\shortstack{640\\$\times$\\1600}}
    & \multirow{6}{*}{p4}
    &  \multirow{3}{*}{StreamPETR} & FP32   & 49.51 & 58.03 & 60.10 & 26.07 & 35.65 & 25.91 & 19.60 \\[1pt]
    & & &             & SQ    & 18.72 & 35.66 & 74.32 & 30.39 & 41.49 & 30.53 & 20.82 \\
    & & &             & Quarot& 19.97 & 37.49 & 71.28 & 30.11 & 40.16 & 30.23 & 20.76 \\ \cmidrule(l){4-12}
    & & &  \multirow{3}{*}{\textbf{QF-StreamPETR}} & FP32   & \textbf{50.48} & \textbf{58.61} & \textbf{58.78} & \textbf{26.16} & \textbf{37.05} & \textbf{25.69} & \textbf{18.59} \\[1pt]
    & & &                           & SQ*   & \textbf{49.44} & \textbf{57.94} & \textbf{60.30} & \textbf{26.55} & \textbf{37.07} & \textbf{25.87} & \textbf{18.59} \\
    & & &                           & Quarot* & \textbf{50.12} & \textbf{58.39} & \textbf{58.86} & \textbf{26.16} & \textbf{37.05} & \textbf{25.87} & \textbf{18.59} \\
  \midrule
  \multirow{6}{*}{R50}
    & \multirow{6}{*}{\shortstack{512\\$\times$\\1408}}
    & \multirow{6}{*}{p4}
    &  \multirow{3}{*}{MV2d-T} & FP32 & 45.30 & 54.30 & 61.70 & 26.50 & 38.80 & 38.50 & 17.90 \\[1pt]
    & & &          & SQ    & 26.32 & 36.14 & 80.10 & 33.74 & 80.13 & 51.71 & 25.06 \\
    & & &          & Quarot& 28.17 & 39.25 & 78.46 & 32.11 & 79.21 & 49.19 & 23.57 \\ \cmidrule(l){4-12}
    & & &  \multirow{3}{*}{\textbf{QF-MV2d-T}} & FP32   & \textbf{46.38} & \textbf{54.91} & \textbf{60.36} & \textbf{26.24} & \textbf{37.61} & \textbf{37.98} & \textbf{17.68} \\[1pt]
    & & &                     & SQ*   & \textbf{45.13} & \textbf{52.71} & \textbf{60.44} & \textbf{26.36} & \textbf{37.62} & \textbf{38.12} & \textbf{17.93} \\
    & & &                     & Quarot* & \textbf{46.25} & \textbf{54.16} & \textbf{60.35} & \textbf{26.24} & \textbf{37.61} & \textbf{37.98} & \textbf{17.68} \\
  \midrule
  \multirow{6}{*}{V2-99}
    & \multirow{6}{*}{\shortstack{320\\$\times$\\800}}
    & \multirow{6}{*}{p4}
    &  \multirow{3}{*}{PETRv2} & FP32   & 41.00 & 50.30 & 72.26 & 26.92 & 45.29 & 38.93 & 19.33 \\[1pt]
    & & &             & SQ    & 15.12 & 31.04 & 96.14 & 30.11 & 45.71 & 52.14 & 27.11 \\
    & & &             & Quarot& 19.20 & 34.15 & 95.23 & 29.76 & 45.65 & 50.46 & 26.26 \\ \cmidrule(l){4-12}
    & & &  \multirow{3}{*}{\textbf{QF-PETRv2}} & FP32   & \textbf{41.86} & \textbf{51.03} & \textbf{71.03} & \textbf{26.15} & \textbf{44.86} & \textbf{37.54} & \textbf{19.04} \\[1pt]
    & & &                     & SQ*   & \textbf{40.73} & \textbf{50.61} & \textbf{71.76} & \textbf{27.02} & \textbf{45.24} & \textbf{37.97} & \textbf{19.42} \\
    & & &                     & Quarot* & \textbf{41.69} & \textbf{50.94} & \textbf{71.03} & \textbf{26.15} & \textbf{44.86} & \textbf{37.54} & \textbf{19.04} \\
  \bottomrule[1.5pt]
\end{tabular}}
  \vspace{-1mm}
  \caption{
    Comparison of floating-point and quantized performance on standard PETR-series models~\cite{streampetr,liu2022petr,liu2022petrv2}. 
    The default quantization setting is full-integer INT8 quantization.
    The DULUT configuration uses 32+32 entries.
    'SQ' denotes SmoothQuant~\cite{xiao2023smoothquant}, and 
    'Quarot' denotes rotation ptq methods ~\cite{ashkboos2024quarot}.
    An asterisk (*) indicates that Quantization-Aware Nonlinear Scaling (QANS) is additionally applied.
  }
  \label{tab:validation_on_petr_series_methods}
\vspace{-0.4cm}
\end{table*}

Details on the benchmark, metrics, and experimental settings are in Appendix C.

\subsection{Validation on PETR and its variants}\label{sec:sota_exp}
We evaluate our method comprehensively on PETR and its variants, covering both single-frame (PETR) and temporal multi-frame (PETRV2, MV2D, StreamPETR) settings, under floating-point (FP) and quantized conditions (see Tab.~\ref{tab:validation_on_petr_series_methods}). \textbf{Firstly}, we examine improvements in floating-point performance. Our method consistently improves mAP (ranging from 0.07 to 1.08 points) and significantly improves NDS (from 0.61 to 1.09 points) for both single-frame PETR models and multi-frame PETRV2, MV2D, and StreamPETR. \textbf{Secondly}, we analyze the quantization performance. For single-frame PETR models and temporal models (StreamPETR, PETRV2, MV2d), our method effectively constrains accuracy degradation to less than 1\% in both mAP and NDS metrics, thanks to the proposed QFPE, DULUT and QANS. \textbf{Finally}, additional experiments on PETR-series models with diverse backbones and input resolutions are provided in the Appendix D.

\subsection{Ablation Study}\label{sec:exp_ablation}
\textbf{Ablation for different quantization methods.} We evaluate the Camera-ray PE module on the nuScenes dataset under three configurations: FP32 Baseline (full precision as an upper bound),  8-bit PTQ Methods SmoothQuant \cite{xiao2023smoothquant} and Quarot \cite{ashkboos2024quarot}. As shown in Table~\ref{tab:quant_softmax_ptq4vit_comparison}, retaining the Softmax input in full precision (“No”) yields higher mAP and NDS than when it is quantized (“Yes”), underscoring the importance of careful Softmax treatment.

\setlength{\tabcolsep}{1.8mm}
\begin{table}[htb]
    \vspace{-2mm}
    \small
    \begin{tabular}{l|c|cc|cc}
    \toprule[1.5pt]
    \multirow{2}{*}{Method} & 
    \multirow{2}{*}{\begin{tabular}[c]{@{}c@{}}Quant.Softmax\\ Input\end{tabular}} & 
    \multicolumn{2}{c|}{(SQ) INT8} & 
    \multicolumn{2}{c}{(Quarot) INT8} \\
    \cline{3-6}
    & & mAP$\uparrow$ & NDS$\uparrow$ & mAP$\uparrow$ & NDS$\uparrow$ \\
    \midrule
    \multirow{3}{*}{{\begin{tabular}[c]{@{}c@{}}Camera-\\ray PE\end{tabular}}}
    & FP32 & 31.42 & 36.11 & 31.42 & 36.11 \\
    & No   & 24.90 & 32.10 & 27.10 & 33.60 \\
    & Yes  & 20.67 & 29.32 & 22.81 & 30.00 \\
    \bottomrule[1.5pt]
    \end{tabular}
    \vspace{-0.3cm}
    \caption{Quantization performance of Camera-ray PE on nuScenes. FP32, 8-bit SmoothQuant and Quarot methods are compared under different Softmax quantization settings.}
    \label{tab:quant_softmax_ptq4vit_comparison}
    \vspace{-2mm}
\end{table}

\noindent\textbf{Effect of Anchor Embedding Quantity.}
The QFPE uses three anchor embeddings per axis, obtained through linear interpolation. Experiments (Tab.~\ref{tab:sd}) demonstrate that setting the number of anchor embeddings to 3 achieves the highest NDS and mAP scores. Adjusting this number either up or down results in lower performance, confirming that 3 is the optimal choice.

\setlength{\tabcolsep}{3.4mm}
\begin{table}[htb] 
\vspace{-2mm}
    \centering
    \begin{tabular}{c|c|c|cc}
    \toprule[1.5pt]
    \multicolumn{3}{c|}{Quantity of Anchor Embedding} & \multirow{2}{*}{NDS$\uparrow$} & \multirow{2}{*}{mAP$\uparrow$}     \\ \cline{1-3}
    \multicolumn{1}{c|}{x-axis} & \multicolumn{1}{c|}{y-axis} & \multicolumn{1}{c|}{z-axis}  &  \\ 
    \midrule
    2 & 2 & 2 & 36.66 & 31.29 \\
    3 & 3 & 3 & \textbf{37.20} & \textbf{31.49} \\
    4 & 4 & 4 & 36.92 & 31.09 \\
    5 & 5 & 5 & 36.83 & 31.19 \\
    \bottomrule[1.5pt]
    \end{tabular}
    \vspace{-1mm}
    \caption{
    Effect of Anchor Embedding Quantity.
    }
    \label{tab:sd}
    \vspace{-5mm}
\end{table}

\noindent\textbf{Proof of Position embedding Equivalence.}
We conducted experiments to verify whether the proposed QFPE enhances floating-point performance over the original camera-ray PE. As shown in Tab.~\ref{tab:floating_point_performance_comparison_of_different_3D_position_embedding}, QFPE provides performance improvements. On PETR, it slightly increases mAP by 0.07 but significantly boosts NDS and mATE by 1.09 and 1.67, respectively. For Stream-PETR, our method yields substantial and balanced enhancements, with increases of 0.94 in mAP, 0.46 in NDS, and 0.22 in mATE.

\begin{figure}[ht]
\vspace{-1mm}
\centering	
\includegraphics[width=1\linewidth]{./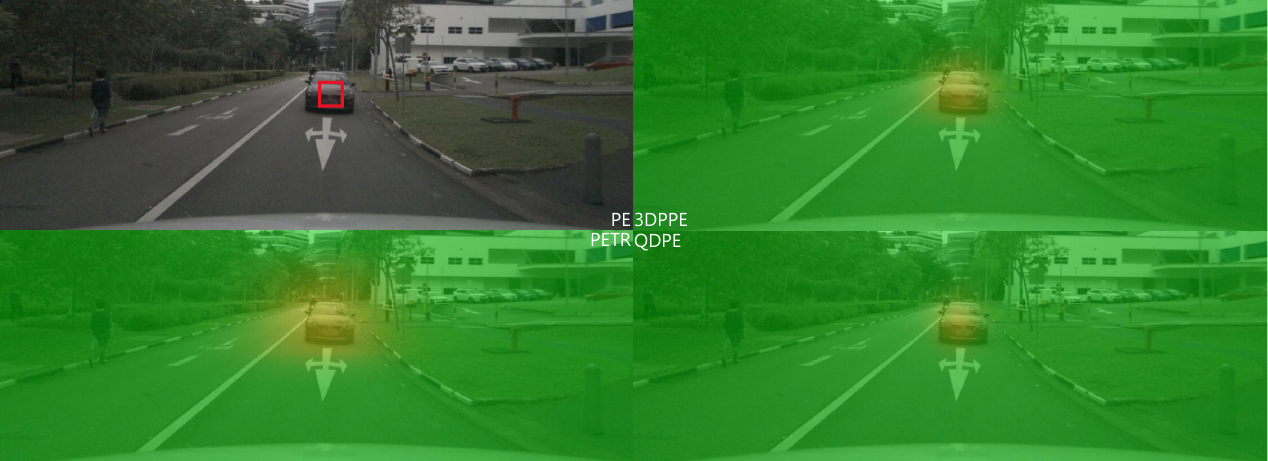}
\vspace{-1mm}
\caption{Qualitative comparison of the local similarity.}
\label{fig:pe_visual_compare}
\vspace{-2mm}
\end{figure}

\setlength{\tabcolsep}{2.9mm}
\begin{table}[htb] 
    \small
    \begin{tabular}{lc|ccc}
    \toprule[1.5pt]
    \multicolumn{2}{c|}{Method} & mAP$\uparrow$ & NDS$\uparrow$ & mATE$\downarrow$ \\
    \noalign{\smallskip}
    \hline
    \noalign{\smallskip}
    \multirow{2}{*}{{\begin{tabular}[c]{@{}c@{}}PETR\end{tabular}}}
    & Camera-ray PE                  & 31.42 & 36.11 & 84.19  \\ 
    & QFPE          & \textbf{31.49} & \textbf{37.20} & \textbf{82.52}  \\
    \hline
    \multirow{2}{*}{{\begin{tabular}[c]{@{}c@{}}Stream\\-PETR\end{tabular}}}
    & Camera-ray PE                  & 
    49.51 & 58.03 & 60.10  \\ 
    & QFPE          & \textbf{50.48} & \textbf{58.61} & \textbf{58.78}  \\
    \bottomrule[1.5pt]
    \end{tabular}
    \vspace{-1mm}
    \caption{FP Performance of different 3D PE.}
\label{tab:floating_point_performance_comparison_of_different_3D_position_embedding}
    \vspace{-4mm}
\end{table}

\textbf{Local Similarity of PE Features.} Figure ~\ref{fig:pe_visual_compare} shows that QFPE significantly outperforms 3D point PE and cameraray PE in local similarity of position embedding. Its similarity distribution appears more compact and concentrated, validating the method's superiority in local spatial information modeling and its capability to precisely capture neighborhood spatial relationships around target pixels.

\textbf{Compare with QAT.} Although QFPE requires retraining, its cost remains comparable to QAT.
We implement a distillation-based QAT (inspired by QD-BEV~\cite{zhang2023qd}) on the original Camera-ray PE.
As shown in Table~\ref{tab:qat_compare}, QFPE consistently outperforms QAT in both mAP and NDS metrics, despite QAT's longer training epochs (24 or 36 epochs).
This demonstrates the advantage of our amplitude-aware design in preserving floating-point performance and improving quantized accuracy.

\setlength{\tabcolsep}{3.7mm}  
\begin{table}[ht]
    \vspace{-2mm}
    \small
    \centering
    \begin{tabular}{c|cc|cc}
    \toprule[1.5pt]
    Epochs & \multicolumn{2}{c|}{QAT (Distill)} & \multicolumn{2}{c}{QFPE (PTQ)} \\
    \cline{2-5}
     & mAP$\uparrow$ & NDS$\uparrow$ & mAP$\uparrow$ & NDS$\uparrow$ \\
    \noalign{\smallskip}
    \hline
    \noalign{\smallskip}
    12 & 28.9 & 35.2 & \textbf{31.46} & \textbf{37.19} \\
    24 & 30.3 & 35.8 & \textbf{31.46} & \textbf{37.19} \\
    36 & 30.3 & 35.8 & \textbf{31.46} & \textbf{37.19} \\
    \bottomrule[1.5pt]
    \end{tabular}
    \vspace{-0.2cm}
    \caption{Comparison of QAT with distillation vs.\ QFPE PTQ across training epochs on the nuScenes dataset.}
    \label{tab:qat_compare}
    \vspace{-2mm}
\end{table}

\noindent\textbf{Impact of Different Scaled Dot Product Quantization Strategy.}
To isolate the effect of our scaled dot-product quantization, we quantize only the softmax inputs and keep all other modules in FP. As shown in Tab.~\ref{tab:performance_of_dfferent_scaled_dot_product_quantization_strategies}, the original strategy causes large drops (about 40\% NDS and 50\% mAP). Our “quant after stabilization” improves accuracy. Ablating the truncation range $N$ shows that $N \geq 20$ is near-lossless, $N < 20$ harms accuracy due to over-truncation, and larger $N$ brings no further gain. We therefore set $N=20$.

\setlength{\tabcolsep}{3.4mm}
\begin{table}[thb]
    \vspace{-3mm}
    \small
    \centering
    \begin{tabular}{lc|p{0.6cm}p{0.6cm}p{0.8cm}}
    \toprule[1.5pt]
    \multicolumn{2}{c|}{Method} & NDS$\uparrow$ & mAP$\uparrow$ & mATE$\downarrow$ \\
    \noalign{\smallskip}
    \hline
    \noalign{\smallskip}
    \multirow{1}{*}{{\begin{tabular}[c]{@{}c@{}}quant before ns\end{tabular}}}
    & -          & 25.31 & 13.79 & 107.94  \\
    \hline
    \multirow{6}{*}{{\begin{tabular}[c]{@{}c@{}}quant after ns\end{tabular}}}
    & N = 1\textcolor{white}{0}  & 3.45 & 1.23 & 150.34  \\ 
    & N = 5\textcolor{white}{0}  & 33.86 & 28.77 & 87.12  \\
    & N = 10 & 34.65 & 29.33 & 85.01  \\
    & N = 20 & 36.10 & 31.42 & 84.19  \\
    & N = 30 & 36.10 & 31.42 & 84.19  \\
    & N = 40 & 36.10 & 31.42 & 84.19  \\
    \bottomrule[1.5pt]
    \end{tabular}
    \vspace{-1mm}
    \caption{Performance of Different Scaled Dot Product Quantization Strategies.}
    \label{tab:performance_of_dfferent_scaled_dot_product_quantization_strategies}
    \vspace{-4mm}
\end{table}

\noindent\textbf{Superior Performance of DULUT for Non-linear Functions.}
Using "quant after stabilization (N=20)" as the baseline (Tab.~\ref{tab:performance_of_dfferent_scaled_dot_product_quantization_strategies}), we compare I-BERT, I-ViT, standard linear LUTs, and our DULUT (Tab.~\ref{tab:quantization_methods_for_nonlinear_functions}). A linear LUT requires 256 entries for lossless accuracy; with 128 entries it drops by 0.54 NDS and 0.37 mAP. DULUT achieves lossless accuracy with 128 entries and remains near-lossless with 64 entries (-0.08\% NDS, -0.02\% mAP), confirming its efficiency for SiLU, GELU, and Softmax.

\vspace{-1mm}
\setlength{\tabcolsep}{2mm}
\begin{table}[htb] 
    \centering
    \small
    \begin{tabular}{lc|ccc}
    \toprule[1.5pt]
    \multicolumn{2}{c|}{Method} & NDS$\uparrow$ & mAP$\uparrow$ & mATE$\downarrow$ \\
    \hline
    \multicolumn{2}{c|}{Quant after stabilization (N=20)} & 36.10  & 31.42 & 84.19 \\
    \hline
    \multicolumn{2}{c|}{I-Bert} & 34.87 & 29.34 & 88.41 \\
    \hline
    \multicolumn{2}{c|}{I-Vit} & 35.03 & 28.77 & 87.32 \\
    \hline
    \multirow{2}{*}{{\begin{tabular}[c]{@{}c@{}}LUT\end{tabular}}}
    & 256 entries & 36.10 & 31.42 & 84.19  \\ 
    & 128 entries & 35.56 & 31.05 & 85.61  \\
    \hline
    \multirow{4}{*}{{\begin{tabular}[c]{@{}c@{}}DULUT\end{tabular}}}
    & (16,16) entries & 28.12 & 17.36 & 96.99  \\ 
    & (16,32) entries & 34.14 & 27.29 &  90.33 \\
    & (32,32) entries & 36.07 & 31.36 & 84.20  \\
    & (64,64) entries & 36.10 & 31.42 & 84.19  \\
    \bottomrule[1.5pt]
    \end{tabular}
    \vspace{-2mm}
    \caption{Performance comparison of different quantization methods for nonlinear functions.}
\label{tab:quantization_methods_for_nonlinear_functions}
\vspace{-4mm}
\end{table}

\textbf{Practical Hardware Resource Savings.}
Tab.~\ref{tab:table} shows Q-PETR runs at 27.6 FPS (3.9$\times$ faster) and 1.3\,GB memory (75\% less) vs.\ PETR’s 7.1\,FPS/4.8\,GB, demonstrating significant speedup and resource efficiency.

\setlength{\tabcolsep}{3.6mm}
\begin{table}[htb]
\centering
\resizebox{0.95\linewidth}{!}{
\begin{tabular}{lccc}
\toprule[1.5pt]
Method & Mode & FPS & CUDA Memory (G) \\
\hline
PETR & FP32 & 7.1 & 4.8 \\
FQ-PETR & INT8 & 27.6 & 1.3 \\
\bottomrule[1.5pt]
\end{tabular}}
  \vspace{-0.2cm}
\caption{\small FPS and CUDA Memory Comparison: PETR vs. Q-PETR (R50-DCN, 512×1408, RTX 4090).}
\label{tab:table}
\vspace{-0.5cm}
\end{table}

\vspace{-1mm}
\section{Conclusion}
We introduce FQ-PETR, a fully quantized framework specifically designed for the efficient deployment of PETRs on edge hardware. 
By addressing the magnitude imbalance between positional embeddings and image features, our proposed QFPE significantly reduces quantization difficulty without sacrificing accuracy. 
We further develop DULUT, a hardware-friendly algorithm for accurately approximating nonlinear functions with minimal resources. 
Additionally, we propose QANS, effectively mitigating the attention distortion issue from quantizing extreme softmax inputs. 
Extensive experiments confirm that FQ-PETR achieves near-floating-point accuracy with substantially improved inference efficiency, demonstrating its practical value for real-world autonomous driving or computer vision 
applications.

\vspace{-1mm}
\section{Limitations and Future Work}
This work targets PETR-style 3D detectors under standard benchmarks, INT8 settings, and common LUT-based accelerators. We have not studied large-model pipelines that use PETR as a 3D feature encoder, nor broad cross-dataset/camera-rig variations. Future work will extend FQ-PETR to these scenarios, explore lower/mixed precision, reduce the finetuning cost of QFPE, and investigate the integration with privacy-preserving federated learning~\cite{liu2024fedbcgd,liu2025consistency,liu2025improving}.

\bibliography{main}

\clearpage
\appendix
\setcounter{page}{1}
{\section*{Appendix}\huge }

\section*{A \quad Theoretical Analysis of Magnitude Bounds in Position embeddings}
\label{sec:mag_analysis}

\subsection{Normalization Framework and Input Conditioning}
\label{subsec:normalization}
To establish a unified analytical framework, we first formalize the spatial normalization process for various ray-based position embeddings. Let $\mathbf{p} = (x, y, z)$ denote the 3D coordinates within the perception range $x, y \in [-51.2, 51.2]$ meters and $z \in [-5, 3]$ meters. The normalized coordinates $\mathbf{v} \in [0, 1]^3$ are computed as:

\begin{equation}
    \mathbf{v} = \left( \frac{x + 51.2}{102.4}, \frac{y + 51.2}{102.4}, \frac{z + 5.0}{8.0} \right)
    \label{eq:normalization}
\end{equation}

Noting that $\mathbf{v}$ is clamped to $\mathbf{v}_c$ within the range $[0, 1]$, the distribution ranges of the normalized sampled points in positional embeddings are characterized as follows:
\begin{itemize}
    \item For the sampled point of Camera-Ray PE, denoted as $\mathbf{v}_c^{CR}$, the distribution spans the unit cube, i.e., $[0, 1] \times [0, 1] \times [0, 1]$.
    \item For the sampled points of LiDAR-Ray PE and QFPE, denoted as $\mathbf{v}_c^{LR}$ and $\mathbf{v}_c^{QD}$ respectively, the distributions are constrained to $[0, 0.79] \times [0, 0.79] \times [0, 1]$.
\end{itemize}
Here, the value $0.79$ is derived from the ratio $30/51.2$, where $30$ corresponds to the fixed depth setting in the embedding process. This distinction highlights the inherent differences in spatial coverage and normalization strategies employed by these positional embeddings.

\subsection{Magnitude Propagation Analysis}
\label{subsec:magnitude_propagation}

\subsubsection{Camera-Ray Position embedding}
\label{subsubsec:camera_pe}
As illustrated in Fig.~\ref{fig:pe_compare} (a), the embedding pipeline consists of two critical stages:

\textbf{Stage 1: Inverse Sigmoid Transformation}
\begin{equation}
    \hat{\mathbf{v}}^{CR} = \ln\left(\frac{\mathbf{v}_c^{CR} + \epsilon}{1 - (\mathbf{v}_c^{CR} + \epsilon)}\right), \quad \epsilon = 10^{-5}
    \label{eq:logit_transform_cr}
\end{equation}
Empirical analysis reveals a maximum magnitude $\eta_{\text{max}} = \max(\|\hat{\mathbf{v}}^{CR}\|_\infty) \approx 11.5$.

\textbf{Stage 2: MLP Projection} (Through Two Fully-Connected Layers)
\begin{equation}
    \text{PE}_{\text{CR}} = \mathbf{W}_2 \sigma(\mathbf{W}_1 \hat{\mathbf{v}}^{CR} + \mathbf{b}_1) + \mathbf{b}_2
    \label{eq:mlp_transform_cr}
\end{equation}
where $\sigma$ denotes the ReLU activation function. Let $\Gamma = \max(\|\mathbf{W}_1\|_{\max}, \|\mathbf{W}_2\|_{\max})$ be the maximum weight magnitude. We derive the upper bound:
\begin{equation}
    \| \text{PE}_{\text{CR}} \|_\infty \leq 256 \cdot 192 \cdot \Gamma^2 \cdot 11.5
    \label{eq:cr_bound}
\end{equation}
where $192$ and $256$ denote the input tensor channels for $\mathbf{W}_1$ and $\mathbf{W}_2$, respectively.

\subsubsection{Ours QFPE}
\label{subsubsec:qd_pe}
The proposed embedding introduces anchor-based constraints, as depicted in Fig.~\ref{fig:pe_compare} (c):

\textbf{Stage 1: Anchor Interpolation} (For Each Axis $\alpha \in \{x, y, z\}$)
\begin{equation}
    \mathbf{e}_\alpha = \frac{p_\alpha - L_\alpha^i}{\Delta L_\alpha} \mathbf{E}_\alpha^{i+1} + \frac{L_\alpha^{i+1} - p_\alpha}{\Delta L_\alpha} \mathbf{E}_\alpha^i
    \label{eq:anchor_interp}
\end{equation}
where $\mathbf{E}_\alpha^i$ denotes learnable anchor embeddings.
Via Theorem~\ref{thm:anchor}, the magnitude is constrain to:
\begin{equation}
    \| \mathbf{e}_\alpha \|_\infty \leq \gamma \quad 
    \label{eq:anchor_bound}
\end{equation}

\textbf{Stage 2: MLP Projection}
\begin{equation}
    \| \text{PE}_{\text{QD}} \|_\infty \leq 256 \cdot 192 \cdot \Gamma^2 \cdot 2.6
    \label{eq:qd_bound}
\end{equation}

\subsection{Comparative Magnitude Analysis}
\label{subsec:comparative}
The derived bounds reveal fundamental differences in magnitude scaling:
\begin{align}
    \frac{\| \text{PE}_{\text{CR}} \|}{\| \text{PE}_{\text{QD}} \|} &\approx \frac{11.5}{2.6} = 4.4
    \label{eq:ratio_qd}
\end{align}
This analysis demonstrates that QFPE requires $4\times$ less quantization range than Camera-Ray PE.

\subsection{Theoretical Guarantee of Magnitude Constraints}
\label{subsec:theorem}
\begin{theorem}[Anchor Embedding Magnitude Bound]
\label{thm:anchor}
Let $\mathbf{E}_\alpha^i, \mathbf{E}_\alpha^{i+1}$ be adjacent anchor embeddings with $\|\mathbf{E}_\alpha^i\|_\infty \leq \gamma$. For any point $p_\alpha \in [L_\alpha^i, L_\alpha^{i+1}]$, its interpolated embedding satisfies:
\begin{equation}
    \| \mathbf{e}_\alpha \|_\infty \leq \gamma
\end{equation}
\end{theorem}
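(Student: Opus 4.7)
The plan is to recognize that the interpolation formula expresses $\mathbf{e}_\alpha$ as a convex combination of the two adjacent anchors, and then to combine the sublinearity of the $\ell_\infty$-norm with the assumed anchor bound. Concretely, I would set $\lambda = (p_\alpha - L_\alpha^i)/\Delta L_\alpha$ and $1-\lambda = (L_\alpha^{i+1} - p_\alpha)/\Delta L_\alpha$, where $\Delta L_\alpha = L_\alpha^{i+1}-L_\alpha^i > 0$, so that $\mathbf{e}_\alpha = (1-\lambda)\mathbf{E}_\alpha^i + \lambda \mathbf{E}_\alpha^{i+1}$.

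The first step is to verify the convexity of the weights. Since $p_\alpha \in [L_\alpha^i, L_\alpha^{i+1}]$, both numerators are non-negative and their sum equals $\Delta L_\alpha$, hence $\lambda, 1-\lambda \in [0,1]$ and $\lambda + (1-\lambda) = 1$. The second step is to apply the triangle inequality together with positive homogeneity of $\|\cdot\|_\infty$:
\begin{equation}
\|\mathbf{e}_\alpha\|_\infty \;\leq\; (1-\lambda)\,\|\mathbf{E}_\alpha^i\|_\infty + \lambda\,\|\mathbf{E}_\alpha^{i+1}\|_\infty.
\end{equation}
The third step uses the hypothesis $\|\mathbf{E}_\alpha^i\|_\infty, \|\mathbf{E}_\alpha^{i+1}\|_\infty \leq \gamma$, giving $\|\mathbf{e}_\alpha\|_\infty \leq (1-\lambda)\gamma + \lambda\gamma = \gamma$, which is exactly the claim. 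Equivalently, one can argue coordinate-wise: for each channel $k$, the interpolated value $(\mathbf{e}_\alpha)_k$ lies in the interval spanned by $(\mathbf{E}_\alpha^i)_k$ and $(\mathbf{E}_\alpha^{i+1})_k$, and both endpoints are in $[-\gamma, \gamma]$, so $(\mathbf{e}_\alpha)_k \in [-\gamma,\gamma]$.

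There is no real obstacle here; the statement is essentially Jensen's inequality for the norm applied to a two-point convex combination. The only subtlety worth stating explicitly in the write-up is that the bound is tight (attained when both anchors coincide and saturate the $\gamma$ bound), and that it extends verbatim to the full positional encoding vector obtained by concatenating the three axis-wise components, since concatenation preserves the $\ell_\infty$-norm. This confirms the magnitude claim used in Eq.~\ref{eq:qd_bound} of the main bound analysis for QD-PE.
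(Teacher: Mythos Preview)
Your proposal is correct and essentially identical to the paper's proof: both introduce the interpolation weight $\lambda = (p_\alpha - L_\alpha^i)/\Delta L_\alpha \in [0,1]$, write $\mathbf{e}_\alpha$ as a convex combination of the anchors, and bound it by $\gamma$ via the triangle inequality (the paper does this coordinate-wise, which you also note as an equivalent route). Your additional remarks on tightness and concatenation are fine embellishments but not in the paper.
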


\begin{proof}
Let $\lambda = \frac{p_\alpha - L_\alpha^i}{\Delta L_\alpha} \in [0, 1]$. The interpolated embedding becomes:
\begin{equation}
    \mathbf{e}_\alpha = \lambda \mathbf{E}_\alpha^{i+1} + (1 - \lambda) \mathbf{E}_\alpha^i
\end{equation}
For any component $k$:
\begin{equation}
    |e_{\alpha,k}| \leq \lambda |E_{\alpha,k}^{i+1}| + (1 - \lambda) |E_{\alpha,k}^i| \leq \lambda \gamma + (1 - \lambda) \gamma = \gamma
\end{equation}
Thus, $\|\mathbf{e}_\alpha\|_\infty \leq \gamma$ holds for all dimensions.
\end{proof}
Through the application of regularization (e.g., L2 constraint) on the anchor embeddings $\mathbf{E}_\alpha^i$
during training, the magnitude of $\gamma$ can be explicitly controlled.
Empirically, we find that this value converges to approximately 0.8 in our experiments.

\section*{B \quad DULUT Under Hardware Constraints}
\label{sec:appendix_dulut}

\subsection*{B.1 \; Why curvature-only merging fails}
For a twice–differentiable $f$, linear interpolation on $[x_i,x_{i+1}]$ obeys
\begin{equation}
\label{eq:lin-int-bound}
\begin{aligned}
\max_{x\in[x_i,x_{i+1}]} \bigl|f(x)-P(x)\bigr|
&\le
\frac{(x_{i+1}-x_i)^2}{8}\,
\max_{x\in[x_i,x_{i+1}]} \bigl|f''(x)\bigr|
\\
&\quad + \varepsilon_{\mathrm{hw}},
\end{aligned}
\end{equation}
where $\varepsilon_{\mathrm{hw}}$ captures rounding and finite precision. If we
ignore $\varepsilon_{\mathrm{hw}}$ and only use curvature, we shrink intervals
where $\lvert f''\rvert$ is large and merge where it is near zero. This can fail
on real hardware due to fixed-point arithmetic and fixed per-segment resolution.

\subsection*{B.2 \; Hardware interpolation model}
Consider signed INT8 inputs $i\in[-128,127]$ and a linear LUT with $T$ entries.
Let $i\_bit=8$ and $t\_bit=\log_2 T$. A common fixed-point blend is
\begin{align}
\text{shift\_bit} &= i\_bit - t\_bit, \\
\text{shift\_num} &= 2^{\text{shift\_bit}}, \\
\text{idx} &= \left\lfloor \frac{i+128}{\text{shift\_num}} \right\rfloor, \\
p &= (i+128)\bmod \text{shift\_num}, \\
\hat{y}(i) &=
\frac{(\text{shift\_num}-p)\, \text{table}[\text{idx}]
+ p\, \text{table}[\text{idx}+1]}{\text{shift\_num}}.
\end{align}
\textbf{Key point.} Within any LUT segment there are only $\text{shift\_num}$
distinct interpolation weights before rounding; thus a segment yields at most
$\text{shift\_num}$ distinct blended values.

\paragraph{Collision lemma (informal).}
If a segment covers more than $\text{shift\_num}$ integer codes, at least two
different inputs will map to the same output after blending and rounding.

\subsection*{B.3 \; SiLU example (INT8, $T{=}32$, resolution $=8$)}
Let $\mathrm{SiLU}(x)=x\,\sigma(x)$. Suppose $i\_bit{=}8$, $T{=}32$ (so
$t\_bit{=}5$), hence $\text{shift\_bit}{=}3$ and $\text{shift\_num}{=}8$.
With scale $s=1/2$, one integer step equals $0.5$ in real value.
Consider $i\in[16,24]$ where $\mathrm{SiLU}''(x)\approx 0$. A curvature-only
rule would merge $[16,20]$ and $[20,24]$ into $[16,24]$ to save entries. But
$[16,24]$ contains $9$ integer codes while the hardware offers only $8$
interpolation positions, so at least one adjacent pair collides (same output).
Since $\mathrm{SiLU}$ is not exactly linear, these collisions create visible
“steps” even when $f''\!\approx 0$.

\subsection*{B.4 \; Practical takeaway}
Curvature-based merging must respect hardware resolution: a segment should not
span more than $\text{shift\_num}$ integer codes. Our \emph{DULUT} uses
error-driven split/merge under these constraints: initialize two cascaded
linear LUTs (equivalent to one), measure per-segment ARE, merge the smallest-ARE
segment and split the largest-ARE segment only if global error decreases and
the span stays within the effective resolution. With INT8, two $32$-entry tables
match a much larger single-table LUT and remain compiler-friendly (standard
ONNX chains fused by common backends).

\section*{C \quad Experimental Setup}\label{sec:exp_setup}
\textbf{Benchmark.}
We use the nuScenes dataset, a comprehensive autonomous driving dataset covering object detection, tracking, and LiDAR segmentation. The vehicle is equipped with one LiDAR, five radars, and six cameras providing a 360-degree view. The dataset comprises 1,000 driving scenes split into training (700 scenes), validation (150 scenes), and testing (150 scenes) subsets. Each scene lasts 20 seconds, annotated at 2 Hz.

\textbf{Metrics.}
Following the official evaluation protocol, we report the nuScenes Score (NDS), mean Average Precision (mAP), and five true positive metrics: mean Average Translation Error (mATE), Scale Error (mASE), Orientation Error (mAOE), Velocity Error (mAVE), and Attribute Error (mAAE).

\textbf{Experimental Details.}
Our experiments encompass both floating-point training and quantization configurations. For floating-point training, we follow PETR series settings, using PETR with an R50dcn backbone unless specified, and utilize the C5 feature (1/32 resolution output) as the 2D feature. Input images are at $1408 \times 512$ resolution. Both the lidar-ray PE and QD-aware lidar-ray PE use a pixel-wise depth of 30m with three anchor embeddings per axis. The 3D perception space is defined as $[-61.2, 61.2]$m along the X and Y axes, and $[-10, 10]$m along the Z axis. We also compare these positional embeddings on StreamPETR, using a V2-99 backbone and input images of $800 \times 320$ resolution.

Training uses the AdamW optimizer (weight decay 0.01) with an initial learning rate of $2.0 \times 10^{-4}$, decayed via a cosine annealing schedule. We train for 24 epochs with a batch size of 8 on four NVIDIA RTX 4090 GPUs. No test-time augmentation is applied.

For quantization, we adopt 8-bit symmetric per-tensor post-training quantization, using 32 randomly selected training images for calibration. When quantizing the scaled dot-product in cross-attention, we define a candidate set of 20 scaling factors.

\setlength{\tabcolsep}{3.5mm}
\begin{table}[thb]
    \small
    \begin{tabular}{lc|p{0.6cm}p{0.6cm}p{0.8cm}}
    \toprule[1.5pt]
    \multicolumn{2}{c|}{Method} & NDS$\uparrow$ & mAP$\uparrow$ & mATE$\downarrow$ \\
    \noalign{\smallskip}
    \hline
    \noalign{\smallskip}
    \multirow{2}{*}{{\begin{tabular}[c]{@{}c@{}}PETR\end{tabular}}}
    & Camera-ray PE & 34.29 & 27.66 & 87.17  \\ 
    & QFPE          & \textbf{37.18} & \textbf{31.40} & \textbf{82.59}  \\
    \hline
    \multirow{2}{*}{{\begin{tabular}[c]{@{}c@{}}Stream\\-PETR\end{tabular}}}
    & Camera-ray PE & 53.74 & 40.23 & 69.39  \\ 
    & QFPE          & \textbf{56.81} & \textbf{47.65} & \textbf{61.53}  \\
    \bottomrule[1.5pt]
    \end{tabular}
    \vspace{-0.3cm}
    \caption{Quantization Performance Comparison of different 3D position embedding.}
    \label{tab:quantization_performance_comparison_of_different_3D_position_embedding}
    \vspace{-0.5cm}
\end{table}

\begin{table}[htb] 
    \centering
    \resizebox{0.99\linewidth}{!}{
    \footnotesize
    \begin{tabular}{l|cc|cc}
    \toprule[1.5pt]
    \multicolumn{1}{c|}{Model Name} 
    & \multicolumn{2}{c|}{qwen2.5-7b-instruct} 
    & \multicolumn{2}{c}{deepseek-r1-distill-qwen-7b} \\
    \noalign{\smallskip}
    \hline
    \noalign{\smallskip}
    & wikitext2$\downarrow$ & gsm8k$\uparrow$ & wikitext2$\downarrow$ & gsm8k$\uparrow$ \\
    \noalign{\smallskip}
    \hline
    \noalign{\smallskip}
    bfp16 & 7.46 & 80.21 & 25.04 & 85.97 \\
    quant before ns & 10000+ & 0.3 & 10000+ & 0.1 \\
    quant after ns(20) & 7.48 & 80.24 & 25.09 & 86.03 \\
    \bottomrule[1.5pt]
    \end{tabular}}
    \vspace{-2mm}
    \caption{Quant after ns in LLMs}
    \label{tab:comparison_qwen_deepseek}
    \vspace{-3mm}
\end{table}

\begin{figure}[htb]
\centering
	\includegraphics[width=1.0\linewidth]{./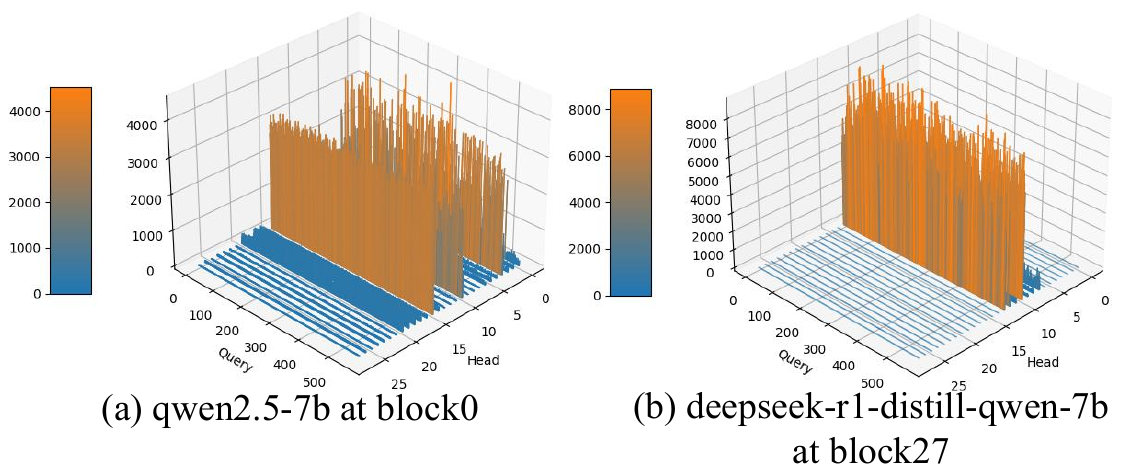}
    \vspace{-6mm}
	\caption{Softmax input distributions from two large language models (qwen2.5-7b at block0 on the left, and deepseek-r1-distill-qwen-7b at block27 on the right).}
	\label{fig:llm_softmax}
    \vspace{-0.5cm}
\end{figure}

\begin{table*}[t]
\setlength{\tabcolsep}{3.2pt}
\centering
\small
\begin{tabular}{l c c l l c c c c c c c}
\toprule
\textbf{Bac} & \textbf{size} & \textbf{Feat} & \textbf{Model} & \textbf{Method} &
\textbf{mAP}$\uparrow$ & \textbf{NDS}$\uparrow$ & \textbf{mATE}$\downarrow$ &
\textbf{mASE}$\downarrow$ & \textbf{mAOE}$\downarrow$ & \textbf{mAVE}$\downarrow$ &
\textbf{mAAE}$\downarrow$ \\
\midrule
\multirow{6}{*}{R50}
  & \multirow{6}{*}{\shortstack{512\\$\times$\\1408}}
  & \multirow{6}{*}{c5}
  & \multirow{3}{*}{PETR}
    & FP32     & 31.42 & 36.11 & 84.19 & 28.42 & 60.69 & 99.08  & 23.58 \\
  &  &  & & SQ       & 20.67 & 29.32 & 107.94 & 31.47 & 75.22 & 121.71 & 25.45 \\
  &  &  & & Quarot   & 22.81 & 30.00 & 104.87 & 30.99 & 74.16 & 118.54 & 25.07 \\
  &  &  & \multirow{3}{*}{\textbf{FQ-PETR}}
    & FP32     & \textbf{31.49} & \textbf{37.20} & \textbf{82.52} & \textbf{27.88} & \textbf{59.91} & \textbf{91.74}  & \textbf{23.45} \\
  &  &  & & SQ\textsuperscript{*}     & \textbf{31.34} & \textbf{37.17} & \textbf{82.61} & \textbf{27.93} & \textbf{60.00} & \textbf{91.79}  & \textbf{23.45} \\
  &  &  & & Quarot\textsuperscript{*} & \textbf{31.46} & \textbf{37.19} & \textbf{82.52} & \textbf{27.89} & \textbf{59.90} & \textbf{91.77}  & \textbf{23.45} \\
\midrule
\multirow{6}{*}{R50}
  & \multirow{6}{*}{\shortstack{512\\$\times$\\1408}}
  & \multirow{6}{*}{p4}
  & \multirow{3}{*}{PETR}
    & FP32     & 32.60 & 71.16 & 82.63 & 27.96 & 61.06 & 95.81  & 23.91 \\
  &  &  & & SQ       & 12.97 & 24.75 & 108.28 & 31.76 & 79.57 & 78.90  & 27.14 \\
  &  &  & & Quarot   & 15.12 & 25.34 & 103.66 & 30.89 & 78.11 & 77.23  & 26.60 \\
  &  &  & \multirow{3}{*}{\textbf{FQ-PETR}}
    & FP32     & \textbf{32.69} & \textbf{38.03} & \textbf{80.58} & \textbf{27.89} & \textbf{59.43} & \textbf{92.69}  & \textbf{22.55} \\
  &  &  & & SQ\textsuperscript{*}     & \textbf{32.40} & \textbf{37.72} & \textbf{81.11} & \textbf{27.92} & \textbf{60.02} & \textbf{92.76}  & \textbf{22.59} \\
  &  &  & & Quarot\textsuperscript{*} & \textbf{32.67} & \textbf{37.99} & \textbf{80.58} & \textbf{27.89} & \textbf{59.43} & \textbf{92.69}  & \textbf{22.55} \\
\midrule
\multirow{6}{*}{V2-99}
  & \multirow{6}{*}{\shortstack{640\\$\times$\\1600}}
  & \multirow{6}{*}{p4}
  & \multirow{3}{*}{PETR}
    & FP32     & 40.66 & 46.50 & 71.76 & 27.07 & 42.23 & 80.68  & 21.06 \\
  &  &  & & SQ       & 12.40 & 26.98 & 117.38 & 34.85 & 83.38 & 106.83 & 27.10 \\
  &  &  & & Quarot   & 13.11 & 27.13 & 112.87 & 32.43 & 81.11 & 104.29 & 26.76 \\
  &  &  & \multirow{3}{*}{\textbf{FQ-PETR}}
    & FP32     & \textbf{41.35} & \textbf{45.99} & \textbf{72.18} & \textbf{26.91} & \textbf{45.05} & \textbf{82.03}  & \textbf{20.67} \\
  &  &  & & SQ\textsuperscript{*}     & \textbf{40.95} & \textbf{45.64} & \textbf{73.40} & \textbf{27.05} & \textbf{45.61} & \textbf{102.17} & \textbf{20.68} \\
  &  &  & & Quarot\textsuperscript{*} & \textbf{41.31} & \textbf{45.86} & \textbf{72.20} & \textbf{26.91} & \textbf{45.05} & \textbf{101.03} & \textbf{20.67} \\
\midrule
\multirow{6}{*}{R50}
  & \multirow{6}{*}{\shortstack{512\\$\times$\\1408}}
  & \multirow{6}{*}{p4}
  & \multirow{3}{*}{MV2d-T}
    & FP32     & 45.30 & 54.30 & 61.70 & 26.50 & 38.80 & 38.50  & 17.90 \\
  &  &  & & SQ       & 26.32 & 36.14 & 80.10 & 33.74 & 80.13 & 51.71  & 25.06 \\
  &  &  & & Quarot   & 28.17 & 39.25 & 78.46 & 32.11 & 79.21 & 49.19  & 23.57 \\
  &  &  & \multirow{3}{*}{\textbf{FQ-MV2d-T}}
    & FP32     & \textbf{46.38} & \textbf{54.91} & \textbf{60.36} & \textbf{26.24} & \textbf{37.61} & \textbf{37.98}  & \textbf{17.68} \\
  &  &  & & SQ\textsuperscript{*}     & \textbf{45.13} & \textbf{52.71} & \textbf{60.44} & \textbf{26.36} & \textbf{37.62} & \textbf{38.12}  & \textbf{17.93} \\
  &  &  & & Quarot\textsuperscript{*} & \textbf{46.25} & \textbf{54.16} & \textbf{60.35} & \textbf{26.24} & \textbf{37.61} & \textbf{37.98}  & \textbf{17.68} \\
\midrule
\multirow{6}{*}{V2-99}
  & \multirow{6}{*}{\shortstack{320\\$\times$\\800}}
  & \multirow{6}{*}{p4}
  & \multirow{3}{*}{StreamPETR}
    & FP32     & 48.19 & 57.11 & 60.99 & 25.58 & 37.54 & 26.28  & 19.43 \\
  &  &  & & SQ       & 18.52 & 36.47 & 76.39 & 31.44 & 47.03 & 30.22  & 20.99 \\
  &  &  & & Quarot   & 21.33 & 38.13 & 75.65 & 30.74 & 46.21 & 29.47  & 20.21 \\
  &  &  & \multirow{3}{*}{\textbf{FQ-StreamPETR}}
    & FP32     & \textbf{49.13} & \textbf{57.57} & \textbf{60.77} & \textbf{26.14} & \textbf{39.05} & \textbf{24.81}  & \textbf{19.15} \\
  &  &  & & SQ\textsuperscript{*}     & \textbf{48.21} & \textbf{56.33} & \textbf{63.00} & \textbf{26.35} & \textbf{39.17} & \textbf{24.90}  & \textbf{19.19} \\
  &  &  & & Quarot\textsuperscript{*} & \textbf{48.75} & \textbf{57.04} & \textbf{61.99} & \textbf{26.23} & \textbf{39.14} & \textbf{24.88}  & \textbf{19.19} \\
\midrule
\multirow{6}{*}{V2-99}
  & \multirow{6}{*}{\shortstack{640\\$\times$\\1600}}
  & \multirow{6}{*}{p4}
  & \multirow{3}{*}{StreamPETR}
    & FP32     & 49.51 & 58.03 & 60.10 & 26.07 & 35.65 & 25.91  & 19.60 \\
  &  &  & & SQ       & 18.72 & 35.66 & 74.32 & 30.39 & 41.49 & 30.53  & 20.82 \\
  &  &  & & Quarot   & 19.97 & 37.49 & 71.28 & 30.11 & 40.16 & 30.23  & 20.76 \\
  &  &  & \multirow{3}{*}{\textbf{FQ-StreamPETR}}
    & FP32     & \textbf{50.48} & \textbf{58.61} & \textbf{58.78} & \textbf{26.16} & \textbf{37.05} & \textbf{25.69}  & \textbf{18.59} \\
  &  &  & & SQ\textsuperscript{*}     & \textbf{49.44} & \textbf{57.94} & \textbf{60.30} & \textbf{26.55} & \textbf{37.07} & \textbf{25.87}  & \textbf{18.59} \\
  &  &  & & Quarot\textsuperscript{*} & \textbf{50.12} & \textbf{58.39} & \textbf{58.86} & \textbf{26.16} & \textbf{37.05} & \textbf{25.87}  & \textbf{18.59} \\
\midrule
\multirow{6}{*}{V2-99}
  & \multirow{6}{*}{\shortstack{320\\$\times$\\800}}
  & \multirow{6}{*}{p4}
  & \multirow{3}{*}{PETRv2}
    & FP32     & 41.00 & 50.30 & 72.26 & 26.92 & 45.29 & 38.93  & 19.33 \\
  &  &  & & SQ       & 15.12 & 31.04 & 96.14 & 30.11 & 45.71 & 52.14  & 27.11 \\
  &  &  & & Quarot   & 19.20 & 34.15 & 95.23 & 29.76 & 45.65 & 50.46  & 26.26 \\
  &  &  & \multirow{3}{*}{\textbf{FQ-PETRv2}}
    & FP32     & \textbf{41.86} & \textbf{51.03} & \textbf{71.03} & \textbf{26.15} & \textbf{44.86} & \textbf{37.54}  & \textbf{19.04} \\
  &  &  & & SQ\textsuperscript{*}     & \textbf{40.73} & \textbf{50.61} & \textbf{71.76} & \textbf{27.02} & \textbf{45.24} & \textbf{37.97}  & \textbf{19.42} \\
  &  &  & & Quarot\textsuperscript{*} & \textbf{41.69} & \textbf{50.94} & \textbf{71.03} & \textbf{26.15} & \textbf{44.86} & \textbf{37.54}  & \textbf{19.04} \\
\bottomrule
\end{tabular}
\caption{Full nuScenes validation metrics corresponding to Table~1. SQ: SmoothQuant; Quarot: QuaRot.
A superscript “*” means QANS is applied in addition to the listed method.
\textbf{All metrics of rows with Model = FQ-\emph{x} are bolded for emphasis, consistent with the main text.}}
\label{tab:full_metrics_appendix}
\end{table*}

\section*{D \quad Full Metrics for Table~1}
\label{sec:appendix_d_full_table}

Table~\ref{tab:full_metrics_appendix} reports the complete nuScenes validation results
corresponding to Table~1, including all true–positive metrics (mATE, mASE, mAOE, mAVE, mAAE).
Unless noted, the quantization setting is full-integer INT8 with per-tensor symmetric activations.
“SQ” denotes \emph{SmoothQuant}; “Quarot” denotes \emph{QuaRot}; a superscript “*” indicates
that QANS (Quantization After Numerical Stabilization) is additionally applied.
“Bac” is the backbone; “Feat” is the feature stage used as input to the decoder.
Resolutions are listed as height~$\times$~width.
\textbf{For rows whose Model is \mbox{FQ-\emph{x}}, we bold all metrics to match the main text.}

\section{E \quad More Ablation Study}\label{sec:more_ablation}

\subsection*{E.1\quad Quantization Performance of Different Position Embeddings}
To experimentally demonstrate the superior quantization performance of our proposed QFPE, we focus solely on quantizing the positional embedding, keeping all other modules in floating-point computation. Detailed results are shown in Tab.~\ref{tab:quantization_performance_comparison_of_different_3D_position_embedding}. The original camera-ray configuration loses up to 11.97\% in mAP and 5.04\% in NDS, whereas our QFPE experiences minimal losses of only \textbf{1.42\%} in mAP and \textbf{1.15\%} in NDS.

\subsection*{E.2\quad Extension to Large Language Models}
Additionally, in large language models, the attention inputs can reach extremely large values (see Fig.~\ref{fig:llm_softmax}). We validate the effectiveness of our method in this setting as well (see Tab.~\ref{tab:comparison_qwen_deepseek}).

\end{document}